\documentclass{article}

\usepackage[final]{neurips_2024}
\usepackage{graphicx}
\graphicspath{{Figs/}}
\usepackage{comment}
\usepackage{caption}
\usepackage[utf8]{inputenc} 
\usepackage[T1]{fontenc}  
\usepackage{hyperref}   
\usepackage{url}   
\usepackage{booktabs}   
\usepackage{amsfonts}      
\usepackage{nicefrac}     
\usepackage{microtype}     
\usepackage[linesnumbered,ruled,vlined,onelanguage,algo2e]{algorithm2e}
\usepackage{makecell}
\usepackage{xcolor}
\usepackage[framemethod=tikz]{mdframed}
\usepackage{mathtools}
\usepackage{amsmath,amsfonts,amssymb,amscd,xspace}
\usepackage{amsthm}
\interdisplaylinepenalty=2500
\usepackage[shortlabels]{enumitem}
\usepackage{bbm}
\usepackage{bm}
\usepackage{booktabs}
\usepackage{algorithm}
\usepackage{algpseudocode}
\usepackage{CJKutf8}

\graphicspath{{Figs/}}

\usepackage{graphicx}

\newtheorem{assumption}{Assumption}

\newcommand{\nt}{T}

\newcommand{\nn}{n}

\DeclareMathOperator*{\argmax}{arg\,max}

\newtheorem{theorem}{Theorem}
\numberwithin{theorem}{section}
\newtheorem{lemma}{Lemma}
\newtheorem{corollary}{Corollary}
\numberwithin{corollary}{section}

\newtheorem{definition}{Definition}

\begin{document}

% \title{Safe Exploitative Play in Bayesian Games with Untrusted Type Beliefs}

\title{Safe Exploitative Play with Untrusted Type Beliefs}

\author{
  Tongxin~Li\textsuperscript{\rm 1}\thanks{Correspondence to: Tongxin Li <\texttt{litongxin@cuhk.edu.cn}>.} ~~~~~~   Tinashe Handina\textsuperscript{\rm 2} ~~~~~~ Shaolei Ren\textsuperscript{\rm 3}
  ~~~~~~   Adam Wierman\textsuperscript{\rm 2}\\
  \\
  \textsuperscript{\rm 1}School of Data Science \\
  The Chinese University of Hong Kong, Shenzhen, China \\
  \texttt{litongxin@cuhk.edu.cn} \\
\\
  \textsuperscript{\rm 2}\text{Computing + Mathematical Sciences} \\
  California Institute of Technology, Pasadena, USA \\
  \texttt{ \{thandina, adamw\}@caltech.edu}\\
\\ 
  \textsuperscript{\rm 3}Electrical \& Computer Engineering \\
  University of California, Riverside, USA \\
  \texttt{shaolei@ucr.edu}
}

% \author{
%   Tongxin~Li\thanks{Correspondence to: Tongxin Li <\texttt{litongxin@cuhk.edu.cn}>.} \\
%   School of Data Science \\
%   The Chinese University of Hong Kong, Shenzhen, China \\
%   \texttt{litongxin@cuhk.edu.cn}
%   \And 
%    Tinashe Handina \\
%   Computing + Mathematical Sciences \\
%   California Institute of Technology, Pasadena, USA \\
%   \texttt{thandina@caltech.edu}
%   \And
%   Shaolei Ren \\
%   Electrical \& Computer Engineering \\
%   University of California, Riverside, USA \\
%   \texttt{shaolei@ucr.edu}
%   \And
%   Adam Wierman \\
%   Computing + Mathematical Sciences \\
%   California Institute of Technology, Pasadena, USA \\
%   \texttt{adamw@caltech.edu}
% }

\maketitle

\begin{abstract}
The combination of the Bayesian game and learning has a rich history, with the idea of controlling a single agent in a system composed of multiple agents with unknown
behaviors given a set of types, each specifying a possible behavior for the other agents. The idea is to plan an agent's own actions with respect to those types which it believes are most likely to maximize the payoff. However, the type beliefs are often learned from past actions and likely to be incorrect. With this perspective in mind, 
we consider an agent in a game with type predictions of other components, and investigate the impact of incorrect beliefs to the agent’s payoff. In particular, we formally define a tradeoff between risk and opportunity by comparing the payoff obtained against the optimal payoff, which is represented by a gap caused by trusting or distrusting the learned beliefs. 
Our main results
characterize the tradeoff by establishing upper and lower bounds on the Pareto front for both normal-form and stochastic Bayesian games, with numerical results provided. 
\end{abstract}

\section{Introduction}

% {\color{blue}
% \begin{itemize}
%     \item State the converse first.
% \end{itemize}}

% The quote above from John F. Kennedy intriguingly captures the dual nature of danger, or risk and opportunity (known as\begin{CJK*}{UTF8}{bsmi}
% 危機
% \end{CJK*} in Chinese). This duality resonates deeply within the realm of real-world multi-agent systems, where risk and opportunity often coexist. 

\begin{quote}
“\textit{The Chinese symbol for crisis is composed of two elements: one signifies danger and the other opportunity.}”
— Lewis Mumford, 1944
\end{quote}

The famous interpretation of the Chinese word for ‘crisis’ (known as\begin{CJK*}{UTF8}{bsmi}
危機\end{CJK*}), although based on mistaken etymology, captures the dual nature of danger and opportunity. It  provides an interesting metaphor for the inherent complexities within real-world multi-agent systems. 
These systems, where risk and opportunity are often inextricably linked, play a pivotal role across diverse domains, ranging from human-AI collaboration \cite{yan2024efficient} and cyber-physical systems \cite{wang2016towards}, to highly competitive environments like real-time strategy games \cite{vinyals2019grandmaster} and poker \cite{brown2019superhuman}.

In conventional applied and theoretical frameworks, it is typically assumed that all agents either cooperate or adhere to pre-defined policies~\cite{lowe2017multi, albrecht2018autonomous,rashid2020monotonic, anagnostides2024convergence}. However, real-world scenarios often defy these simplifications, presenting agents that display a spectrum of behaviors ranging from cooperative, to heterogeneous, irrational, or even adversarial~\cite{li2023byzantine}. This deviation from expected behavior patterns complicates the dynamics of multi-agent systems, as agents cannot reliably predict the actions of their counterparts. The uncertainty regarding whether to trust or distrust predictions naturally leads to a critical tradeoff between the coexisted risk and opportunity, reflecting the dual aspects highlighted in Mumford’s remark.

% Multi-agent systems  are essential in diverse contexts, from human-AI collaboration~\cite{yan2024efficient} and cyber-physical systems~\cite{wang2016towards, zhang2021multi} to competitive domains such as real-time strategy games~\cite{vinyals2019grandmaster} and poker~\cite{brown2019superhuman} where players interact in a shared environment. However, in contrast to assumptions in both applied and theoretical frameworks that all players are either cooperative or implement pre-defined policies~\cite{lowe2017multi, albrecht2018autonomous,rashid2020monotonic, anagnostides2024convergence}, real-world applications often involve players that have heterogeneous, irrational, or even adversarial behaviors~\cite{li2023byzantine}. Furthermore,  players may not be fully trustworthy and thus do not know the behaviors of the others.

As critical examples, Bayesian games~\cite{harsanyi1967games,harsanyi1968games} provide an approach for modeling differing types of agents in strategic environments.  In these games, players form beliefs about others' types and update beliefs in
response to observed actions and choose their actions accordingly. For example, in competitive settings like poker or even in simple games like matching pennies, deviating from the game theoretic optimal (GTO) strategy~\cite{friedman1971optimal,zadeh1977computation} to exploit weaker opponents can be beneficial, but this approach also relies on potentially flawed type beliefs~\cite{albrecht2018autonomous}, making it risky to take advantage of such side-information. Similarly, in real-world problems like security games, having a prior distribution of attackers’ behavioral types in a Bayesian game setting leads to advantages. However, exploiting incorrect types can be risky compared to just using minimax strategies. 
Despite the ubiquity of incorrect type beliefs in practical scenarios, limited attention has been paid to explore such a tradeoff, with exceptions in designing heuristically safe and exploitative strategies in specific contexts, such as with Byzantine adversaries~\cite{li2023byzantine} and sequential games~\cite{milec2021continual}.

Inaccuracies in beliefs about others' types may arise from factors such as using out-of-distribution data to generate priors, changes in opponents' behaviors, or mismatches between hypothesized types and the optimal type space, etc. As noted in~\cite{albrecht2016belief}, it is evident that prior beliefs significantly affect the long-term performance of type-based learning algorithms like the Harsanyi-Bellman Ad Hoc Coordination (HBA). 
Although algorithms like HBA demonstrate asymptotic convergence to correct predictions or type distributions through various methods of estimating posterior beliefs, and methods exist for detecting inaccuracies in type beliefs using empirical behavioral hypothesis testing~\cite{albrecht2016belief}, the theoretical capabilities of general algorithms remain uncertain. 

In general, relying on learned beliefs presents a fundamental tradeoff between the potential payoffs from exploitative play and the risk of incorrect type beliefs. Given the potential inaccuracies in these type beliefs, relying on them to exploit opponents could lead to high-risk strategies. Conversely, not exploiting these beliefs might result in overly cautious play. Therefore, it is natural to investigate the impact of incorrect beliefs on the agent's payoff, in terms of a tradeoff between trusting or distrusting the beliefs of types provided by type-based learning algorithms (e.g., Bayesian learning~\cite{jordan1991bayesian}, best response dynamics~\cite{bichler2023computing}, and policy iteration with neural networks~\cite{bichler2021learning}, etc.) in multi-agent systems. To summarize the focus of this paper, we aim to address the following critical question:

\vspace{-1pt}

\begin{center}
    \textit{What is the fundamental tradeoff between trusting/distrusting type beliefs in games?}
\end{center}

\vspace{-1pt}

\textbf{Contributions.} 
Motivated by the above question, we analyze the following \textit{payoff gap} that arises from erroneous type beliefs: 
\begin{align}
\label{eq:informal_gap}
\Delta(\varepsilon;\pi) &\coloneq \max_{d\left(\theta,\theta^{\star}\right)\leq \varepsilon} \left(\max_{\phi} \mathsf{Payoff}(\phi,\theta^{\star}) - \mathsf{Payoff}(\pi(\theta),\theta^{\star})\right),
\end{align}
where $\theta$ and $\theta^{\star}$ are predicted and true type beliefs respectively; $d(\cdot,\cdot)$ measures the distance between $\theta$ and $\theta^{\star}$ bounded from above by  $\varepsilon$ and will be formally specified with concrete model contexts in Section~\ref{sec:nfg} and Section~\ref{sec:sbg}, together with the payoff, denoted by $\mathsf{Payoff}(\cdot,\cdot)$ as a function of the true type $\theta^{\star}$ and the used strategy $\pi$. The agent uses a strategy $\pi$ that depends on the type belief $\theta$. Overall, the payoff difference~\eqref{eq:informal_gap} above quantifies the worst-case gap between the optimal payoff (obtained using an optimal strategy that maximizes $\mathsf{Payoff}(\cdot,\theta^{\star})$), and the payoff corresponding to $\pi$. 

In particular, when the agent's strategy $\pi$ trusts the belief $\theta$, it takes the opportunity to close the gap in ~\eqref{eq:informal_gap} when the belief error $\varepsilon$ is small. However, when $\varepsilon$ increases, such a strategy incurs a high risk since it trusts the incorrect $\theta$.
Evaluating the payoff gap in~\eqref{eq:informal_gap} naturally yields a tradeoff between opportunity and risk, as illustrated on the right of Figure~\ref{fig:system}. 
To be more precise, we measure the tradeoff between two important quantities:

\vspace{-2pt}
\textit{(Missed) Opportunity}: 
$\Delta(0;\pi)$, corresponding to the case when the type beliefs are correct; 

\vspace{-4pt}
\textit{Risk}: $\max_{\varepsilon>0}\Delta(\varepsilon;\pi)$ measuring the payoff difference incurred by worst-case incorrect beliefs. 

\vspace{-2pt}
In summary, the (missed) opportunity measures the discrepancy of a strategy $\pi$ from the optimal strategy, which is aligned with the ground truth belief $\theta^{\star}$ of other players in terms of the obtained payoff. Additionally, the risk quantifies how inaccurate beliefs impact the difference in terms of payoffs in the worst case. The goal of this paper is to investigate safe and exploitative strategies in Bayesian games that achieve near-optimal opportunity and risk.

\begin{figure}[t]
\centering
\includegraphics[width=0.8\textwidth]{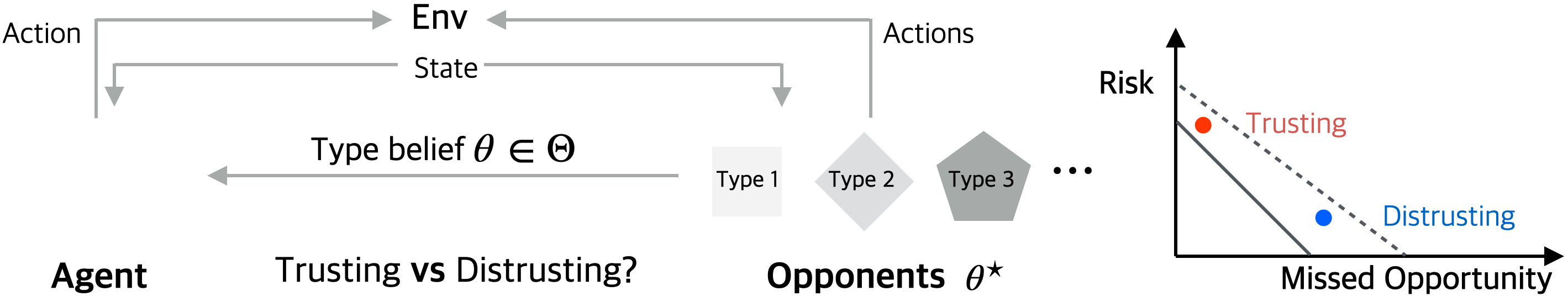}
% \caption{Illustration of the tradeoff in a normal-form game between opportunity and risk with $\alpha=5$ and $V\in \{1,2,\ldots,5\}$.}
\caption{\textbf{Left}: A stochastic Bayesian game where an agent interacts with an environment and opponents, with a belief of their types $\theta\in\Theta$. \textbf{Right}: The tradeoff between trusting and distrusting type beliefs, with trust leading to higher risk and opportunity and distrust resulting in lower risk and opportunity, implying an opportunity-risk tradeoff with varying strategy $\pi$.}
\label{fig:system}
\end{figure}

Our main results are two-fold. Firstly, in normal-form Bayesian games, we characterize a tradeoff between the opportunity and risk. We consider a strategy as a convex combination of a safe strategy and the best response given type beliefs. Upper bounds on opportunity and risk are provided in Theorem~\ref{thm:normal_form_upper}. Conversely, lower bounds that hold for any mixed strategy are shown in Theorem~\ref{thm:normal_form_lower}. Notably, when the game is fair, and the hypothesis set $\Theta$ is sufficiently large, these bounds tightly converge. Secondly, we explore a dynamic setting in stochastic Bayesian games, where an agent, provided with type beliefs about other players, engages in interactions over time, as illustrated in Figure~\ref{fig:system}. Unlike the normal-form approach, we utilize a value-based strategy that establishes upper bounds on opportunity and risk, as outlined in Theorem~\ref{thm:sbg_upper}. Additionally, Theorem~\ref{thm:sbg_lower} provides lower bounds on opportunity and risk that differ from the upper bounds by multiplicative constants, yielding a characterization of the opportunity-risk tradeoff. 
Finally, a case study of a security game, simulating a defender protecting an elephant population from illegal poachers, is provided in Section~\ref{sec:exp}.

% Suppose there are two players. The first player gets a prediction of the mixed strategy of the second player, denoted by $\widetilde{y}$. In the matrix general sum setting, the payoff matrix is denoted by $A\in \mathbb{R}^{m\times n}$ where the first player has $m$ choices of the rows and the second player has $n$ choices of the columns of $A$. The first player gets $a_{ij}$ and second player gets $-a_{ij}$.

% Poker GTO is a software that helps players move game theoretically optimal. 
% {Should I Always Follow GTO Strategy?}
% Professional poker player Daniel Negreanu advises that GTO poker should be your baseline strategy, but you should deviate from it in order to exploit your opponents. The key is to exploit your opponents with discreet adjustments so that they don’t realize your strategy.

% You must create a hybrid strategy between GTO and exploitative play to maximize your profit, but always remember that GTO play is not necessary against weak opponents. Weak opponents always make mistakes, which means that you must always target them with adjustments rather than worry about being defensive.

\section{Related Work}
\vspace{-5pt}

\textbf{Learning Types in Games.}
The concept of type-based methods dates back to the development of Bayesian games, as first established by Harsanyi in the late 1960s~\cite{harsanyi1967games,harsanyi1968games}. 
The concepts of learning and updating beliefs appear in pioneering works like the
adaptive learning~\cite{milgrom1991adaptive}. While much game theory research, including work by Kalai et al.~\cite{kalai1993rational}, focuses on equilibrium analysis through Bayesian belief-based learning, other studies, such as those by Nachbar et al.~\cite{nachbar1997prediction,nachbar2005beliefs} and~\cite{foster2001impossibility, dekel2004learning}, reveal the challenges players face in making correct predictions while playing optimally under certain game conditions and assumptions. 
From an application perspective, Southey et al.~\cite{southey2012bayes} applied type-based methods to poker, where players' hands are partially hidden. They demonstrated how to maintain and use beliefs to determine the best strategies in this setting.
Besides classic results exemplified above, closely related to our results, the recent work by Milec et al.~\cite{milec2021continual} addresses the limitations of Nash equilibrium strategies in two-player extensive-form games, particularly their inability to exploit the weaknesses of sub-optimal opponents. They defined the exploitability of a strategy as the expected payoff that a fully rational opponent can gain beyond the game's base value and introduced a method that ensures safety, defined as an upper limit on exploitability compared to the payoff obtained using a Nash equilibrium strategy. However, the proposed tradeoff between the exploitation of the opponent given the correct model and safety
against an opponent who can deviate arbitrarily from the predicted model is applicable specifically to an algorithm that employs a continual depth-limited restricted Nash response.

\textbf{Online Decision-Making with Predictions.} The tradeoff between opportunity and risk analyzed in this work is motivated by the recent progress in algorithms with predictions, also known as learning-augmented algorithms~\cite{mahdian2012online,purohit2018improving} for online decision-making problems such as caching~\cite{rohatgi2020near,lykouris2021competitive,im2022parsimonious}, bipartite matching~\cite{antoniadis2020online}, online optimization~\cite{christianson2022chasing,li2024robust}, control~\cite{li2021information,li2022robustness,lin2022bounded,li2023certifying}, valued-based reinforcement learning~\cite{golowich2022can,li2024beyond,yang2024anytime}, and real-world applications~\cite{li2021learning, christianson2022robustifying,li2023learning,li2024out}. This line of work investigates the impact of untrusted predictions on two key metrics known as consistency and robustness, which are defined based on the competitive ratio of the considered contexts. 
It is also worth highlighting that previous works in decision-making often provide best-of-both-worlds guarantees for both stochastic and adversarial environments~\cite{jin2021best,amir2022better}, while the results in this work shed light on studying the intermediate regimes that do not fully align with either stochastic or adversarial settings and delves into interactive environments formed by players whose behaviors may deviate from the type beliefs.

\textbf{Stochastic Bayesian Games.}
Our results presented in Section~\ref{sec:sbg} draw on foundational concepts from stochastic Bayesian games as outlined by Albrecht et al.~\cite{albrecht2015game,albrecht2016belief}, which merge concepts of the Bayesian games~\cite{harsanyi1967games,harsanyi1968games}
and the stochastic 
games~\cite{shapley1953stochastic}, with applications in cooperative multi-agent reinforcement
learning~\cite{li2023byzantine}. 
In~\cite{albrecht2016belief}, 
three methods—product, sum, and correlated—for integrating observed evidence into posterior beliefs have been explored. Specifically, it has been demonstrated that the Harsanyi-Bellman Ad Hoc Coordination (HBA) eventually make right future predictions under specific conditions using the product posterior beliefs. However, while HBA may eventually align with the correct type distribution, it is not guaranteed to learn it with the product posterior beliefs. Under certain conditions, HBA with the sum and correlated posterior converges to the correct type distribution. Nonetheless, even though methods like empirical behavioral hypothesis testing are available to detect inaccuracies in type beliefs, a comprehensive theoretical analysis is still lacking.

\vspace{-3pt}

% The stochastic elements of the SBG enable the consideration of uncertainty
% in the interactions of the attacker and defender. As in a stochastic game, a set of states define
% the environment for the players’ interactions, and the state transitions occur stochastically as
% a function of the players’ actions. The Bayesian elements of the SBG enable the consideration
% of the uncertainty regarding the attacker’s characteristics. As in a Bayesian game, a set
% of types are defined to describe possible behaviors of at least one of the players. 

% guesser’s advantage: ~\cite{eliaz2011edgar}

\vspace{-3pt}

\section{Normal-Form Bayesian Games with Untrusted Type Beliefs}
\label{sec:nfg}

\vspace{-5pt}

Let $\|\cdot\|_1$ denote the $\ell_1$-norm and $\|\cdot\|_{\max}$ denote the element-wise max norm. 

\vspace{-5pt}

\subsection{Problem Setting, Opportunity, and Risk}

\vspace{-5pt}

Consider a normal-form game with two players: Player 1 possesses a payoff matrix $A\in\mathbb{R}^{a\times b}$ with $\|A\|_{\max}\leq \alpha$, where the first player has $a$ choices of the rows and the second player has $b$ choices of the columns of $A$. Player 1 forms a belief about Player 2's mixed strategy, denoted by a probability distribution $\rho$ over a set of hypothesized strategies $\Theta$ that contains a ground truth strategy $y^{\star}$.\footnote{Note that we assume $y^{\star}\in\Theta$ for the ease of presentation aligning with Assumption 1 in~\cite{albrecht2016belief}, and our results can be easily generalized to the case when $\Theta$ does not contain $y^{\star}$ by modifying the definition of $\varepsilon$ to incorporate incomplete or incorrect hypothesized types.} 

As a concrete example of the \textit{payoff gap} proposed in~\eqref{eq:informal_gap}, the following benchmark characterizes the gap between payoffs obtained by a strategy with machine-learned belief $\rho$ and an optimal strategy knowing $y^{\star}$ beforehand:
\begin{align}
\label{eq:gap_normal_form}
\Delta_{\mathsf{NFG}}(\varepsilon;\pi) &\coloneq \max_{d\left(\rho,y^{\star}\right)\leq \varepsilon} \left(\max_{x\in\mathsf{P}_{a}}x^{\top}A y^{\star} - \pi(\rho)^{\top}A y^{\star}\right),
\end{align}
given a fixed policy $\pi:\mathsf{P}_\Theta\rightarrow \mathsf{P}_{a}$ that outputs a mixed strategy of the first player knowing $\rho$, where $d\left(\rho,y^{\star}\right)\coloneq \|\mathbb{E}_{\rho}[y]-y^{\star}\|_1$, $\mathsf{P}_\Theta$ and $\mathsf{P}_{a}$ are sets of probability distributions on $\Theta$ and the $a$ different choices of rows respectively. 

In particular, we focus on measuring a tradeoff between two important quantities, the \textit{(missed) opportunity} 
$\Delta_{\mathsf{NFG}}(0;\pi)$ and the \textit{risk}  $\max_{\varepsilon>0}\Delta_{\mathsf{NFG}}(\varepsilon;\pi)$. 
% In addition, a strategy $\pi$ is \textit{$\beta$-foregone} if  $\Delta_{\mathsf{NFG}}(0;\pi)\leq \beta(\pi)$ and \textit{$\delta$-risky} if $\max_{\varepsilon>0}\Delta_{\mathsf{NFG}}(\varepsilon;\pi)\leq \delta(\pi)$.
The former measures how far the considered strategy $\pi$ is away from the optimal strategy knowing the ground truth type $y^{\star}$ of Player 2 in terms of the payoff obtained; the latter quantifies the worst-case impact of inaccurate belief on the payoff difference. 

\vspace{-5pt}

\subsection{Motivating Example: Matching Pennies}
\label{sec:matching_pennies}

Before presenting general results, we illustrate the underlying concepts through a simple normal-form game. Consider the matching pennies game as a classic example, where the mixed strategies for players as defined follows. 

Suppose $a=b=2$ and each of the two players has the option to choose either Heads (\texttt{H}) or Tails (\texttt{T}). Let $y^{\star}\in [0,1]$ be the true probability that Player 2 plays \texttt{H} and $1-y^{\star}$ the probability of playing \texttt{T}. Suppose the hypothesis set $\Theta$ contains all possible mixed strategies, each corresponding to a type of Player 2. Then, Player 1 receives a belief of $y^{\star}$, denoted by $y$, and chooses a strategy $\pi$ that depends on $y$. Similarly, let $x=\pi(y)$ be the probability that Player 1 plays \texttt{H} and $1-x$ the probability of playing \texttt{T}. If two players' actions match, Player 1 will receive $1$ and $-1$ otherwise. 
The problem setting is summarized in Figure~\ref{fig:system_mp}. Given $y^{\star}$ and $x$, the expected payoff is therefore $(2y^{\star}-1)(2x-1)$. 
The best response of Player 1, depending on $y$, is characterized by:
\begin{align}
\label{eq:br_matching}
 \mathsf{BR}(y) = \begin{cases}
x = 0 & \text{if } y < \frac{1}{2}, \\
x \in [0,1] & \text{if } y = \frac{1}{2}, \\
x = 1 & \text{if } y > \frac{1}{2}. 
\end{cases}
\end{align}
Given a strategy $\pi:[0,1]\rightarrow [0,1]$, the payoff gap in~\eqref{eq:gap_normal_form} for this example is instantiated as
\begin{align}
\label{eq:gap_mp}
\Delta_{\mathsf{MP}}(\varepsilon;\pi) &\coloneq \max_{|y-y^{\star}|\leq \varepsilon} 2\Big( (2y^{\star}-1)\mathsf{BR}(y^{\star})- (2y^{\star}-1)\pi(y)\Big).
\end{align}

\begin{figure}[t]
\centering
\includegraphics[width=0.75\textwidth]{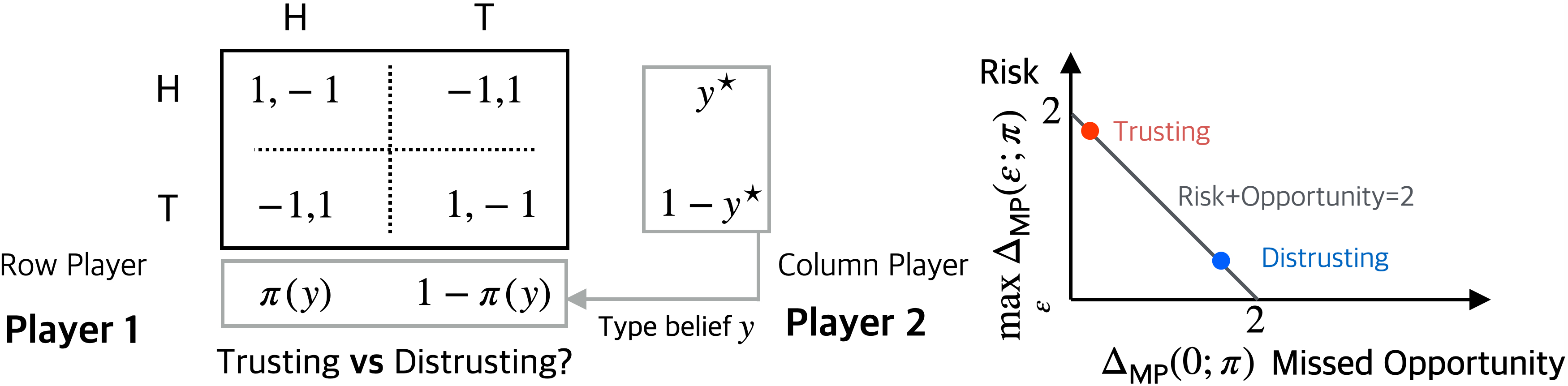}
\caption{\textbf{Left}: Matching Pennies payoff matrix for Player 1 (row player) with type belief $y$ and Player 2 (column player) whose strategy is defined by $y^{\star}$. \textbf{Right}: Opportunity-risk tradeoff that satisfies $\Delta_{\mathsf{MP}}(0;\pi)+\max_{\varepsilon}\Delta_{\mathsf{MP}}(\varepsilon;\pi)=2$.}
\label{fig:system_mp}
\end{figure}

\textbf{Impossibility.}
We first consider an arbitrary strategy $\pi:[0,1]\rightarrow [0,1]$ that misses at most $(1-\lambda)$ opportunity. Suppose $\pi$ chooses \texttt{H} with a probability $\pi(y)$ and \texttt{T} with a probability $1-\pi(y)$. Therefore, setting $\varepsilon=0$, the payoff gap must satisfy
$$
{\Delta_{\mathsf{MP}}(0;\pi) =\max_{y\in [0,1]} 2(2y-1)\left(\mathsf{BR}(y)-\pi(y)\right) \leq 1-\lambda},
$$
which implies $\pi\left(0\right)\leq (1-\lambda)/2$ and $\pi\left(1\right)\geq (1+\lambda)/2$ by setting $y=0$ and $y=1$, respectively.
Thus, plugging in $y^{\star}=1$ and $y=0$, we conclude that
\begin{align*}
% \label{eq:lower_bound_1}
\max_{\varepsilon}\Delta_{\mathsf{MP}}(\varepsilon;\pi) 
 \geq \max_{|y-y^{\star}|\leq 1} 2\left(\mathsf{BR}(y^{\star})-\pi(y)\right)\geq 2\left(\mathsf{BR}(1)-\pi(0)\right)\geq 1+\lambda,
\end{align*}
since $\mathsf{BR}(1)=1$ by~\eqref{eq:br_matching}. This implies that $\pi$ has at least $(1+\lambda)$ risk. Note that this argument holds for any arbitrarily chosen $\pi$. In a word, any strategy misses at most $1-\lambda$ opportunity must incur at least $1+\lambda$ risk, in terms of the payoff.

\textbf{Mixed Strategy Existence.}
Now, let us construct a concrete strategy to derive upper bounds on the (missed) opportunity and risk. The idea is to combine the strategy $\mathsf{BR}(y)$, which exploits the belief $y$ of types, and the Nash equilibrium strategy of this game, known as $\overline{x} = 1/2$, which is a solution of the minimax problem
$\min_{y\in [0,1]}\max_{x\in [0,1]} (2y-1)(2x-1)$ that enhances safety. 

Fix $\pi(y)\coloneq \lambda\mathsf{BR}(y)+(1-\lambda)\overline{x}$ a mixed strategy as a convex combination of the two strategies. 

The payoff gap in~\eqref{eq:gap_mp} satisfies
\begin{align*}
\Delta_{\mathsf{MP}}(\varepsilon;\pi) \leq\max_{|y-y^{\star}|\leq \varepsilon} 2(2y^{\star}-1) \Big(\lambda\left(\mathsf{BR}(y^{\star})-\mathsf{BR}(y)\right)+(1-\lambda)\left(\mathsf{BR}(y^{\star})-1/2\right)\Big).
\end{align*}

\textit{Opportunity:} Therefore, there exists a mixed strategy $\pi$ such that when $\varepsilon=0$ (i.e., $\mathsf{BR}(y)=\mathsf{BR}(y^{\star})$), its (missed) opportunity is bounded by
\begin{align*}
\Delta_{\mathsf{MP}}(0;\pi)=\max_{y\in [0,1]} 2(2y-1)\left(1-\lambda\right)\left(\mathsf{BR}(y)-1/2\right)\leq 1-\lambda.
\end{align*}

\textit{Risk:}
Moreover, maximizing over $\varepsilon$ such that $y=0$, $y^{\star}=1$, $\mathsf{BR}(y^{\star})-\mathsf{BR}(y)=1$, the risk for $\pi$ always satisfies $\max_{\varepsilon}\Delta_{\mathsf{MP}}(\varepsilon;\pi)\leq 1+\lambda$. 

In a word, we find a strategy that misses $1-\lambda$ opportunity and meanwhile has $1+\lambda$ risk.

\textbf{Pareto Optimality}
In conclusion, above construction shows that there is a mixed strategy $\pi$ for the matching pennies that misses $(1-\lambda)$ opportunity and incurs $(1+\lambda)$ risk. Conversely, any strategy that misses at most $(1-\lambda)$ opportunity must have at least $(1+\lambda)$ risk. The segment on the right of Figure~\ref{fig:system_mp} represents a Pareto front, and the strategy constructed as a convex combination is confirmed to be Pareto optimal following the arguments above.
This motivating example indicates a tight tradeoff between opportunity and risk for matching pennies.
In the sequel, we further generalize this result to normal-form games.

\subsection{Opportunity-Risk Tradeoff for Normal-Form Games}

In general, the opportunity-risk tradeoff depends on the hypothesis set $\Theta$ that contains a subset of candidate strategies and the payoff matrix of the game.
We state useful definitions to characterize properties of the hypothesis set $\Theta$ and the considered normal-form game. 

% Before we proceed to discuss our main results, we first consider a normal-form game as a warm-up example and formally state the results previewed in Figure~\ref{fig:normal_form} by constructing a specific strategy based on a belief distribution. Motivated by the construction of such a strategy, we present our analysis for stochastic Bayesian games in Section~\ref{sec:sbg}.

% Consider the following performance benchmark:

% \begin{align*}
%     \textsf{Utility-Gap}_{\pi}(\varepsilon)\coloneq \max_{A:\|A\|\leq \alpha} \max_{(y,\widetilde{y}):\|y-\widetilde{y}\|\leq \varepsilon} \left(\max_{x\in\mathsf{P}_a}x^{\top}A y - \pi(\widetilde{y})^{\top}A y\right)
% \end{align*}
% where $\pi:\mathsf{P}_b\rightarrow\mathsf{P}_a$ is a policy that outputs a mixed strategy of the first player.

\begin{definition}
\label{def:terms_nfg}
Given a hypothesis set $\Theta$, we let the \textit{diameter} of $\Theta$ with respect to the $\ell_1$-norm be $\eta(\Theta)\coloneq \max_{y,z\in\Theta}\|y-z\|_1$. Define the following \textbf{type intensity}
\begin{align}
\label{eq:kappa}
    % \kappa(\Theta)\coloneq \max_{y,z\in\Theta}\left(\sum_{i:y_i=0}z_i-\sum_{i:y_i>0}z_i\right).
    \kappa(\Theta)\coloneq \max_{y,z\in\Theta}\left(\sum_{i:y_i\leq z_i}z_i-\sum_{i:y_i>z_i}z_i\right) \text{ subject to } \sum_{i:y_i\leq z_i}y_i< \sum_{i:y_i>z_i}y_i.
\end{align}
Furthermore, with fixed $\Theta$ and $A$, we define the maximum and value of the game by
\begin{align}
\label{eq:mu_nu}
    \mu_{\Theta}(A) \coloneq \max_{y\in\Theta}\max_{x\in\mathsf{P}_a}\left|x^{\top}A y\right| \ \ \textbf{(maximum)}, \quad 
    \nu_{\Theta}(A) \coloneq \min_{y\in\Theta}\max_{x\in\mathsf{P}_a}x^{\top}A y \ \ \textbf{(value)}.
\end{align}
\end{definition}

The type intensity, as defined in~\eqref{eq:kappa}, quantifies the divergence between two distributions within the set $\Theta$, specifically those that maximize the given objective. As the density of $\Theta$ increases, the value of $\kappa(\Theta)$ approaches $1$.
% Our first result states upper bound on opportunity and risk for normal-form games.

\begin{theorem}[\textsc{NFG Existence}]
\label{thm:normal_form_upper}
Fix any $\Theta$ and
consider a general-sum normal-form game where Player 1 has a payoff matrix $A\in\mathbb{R}^{a\times b}$ with $\mu_{\Theta}(A)\leq \mu$ and $\nu_{\Theta}(A)\geq \nu$. For any $0\leq \lambda\leq 1$, there exists a mixed strategy $\pi:\mathsf{P}_{\Theta}\rightarrow\mathsf{P}_{a}$ for Player 1 misses $(1-\lambda)\left(\mu-\nu\right)$ opportunity and has $\left((1-\lambda) \left(\mu-\nu\right) + \lambda\mu\eta(\Theta)\right)$ risk. 
\end{theorem}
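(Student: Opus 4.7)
The plan is to generalize the convex-combination construction from Section~\ref{sec:matching_pennies}: set $\hat{y}(\rho) := \mathbb{E}_{y \sim \rho}[y]$, let $\mathsf{BR}(z) \in \argmax_{x \in \mathsf{P}_a} x^\top A z$, and pick a safe strategy $\bar{x}$ satisfying $\bar{x}^\top A y \geq \nu$ for every $y \in \Theta$. The existence of such a $\bar{x}$ is to be extracted from the hypothesis $\nu_\Theta(A) \geq \nu$ via a minimax/Sion argument on $\mathsf{P}_a \times \mathrm{conv}(\Theta)$: interchanging the $\max_x$ and $\min_y$ in the defining expression for $\nu_\Theta(A)$ produces a maximin-style strategy uniformly good over the convex hull of $\Theta$. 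The candidate is then $\pi(\rho) := \lambda\, \mathsf{BR}(\hat{y}(\rho)) + (1-\lambda)\, \bar{x}$, which is a valid element of $\mathsf{P}_a$ since both components are.

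The opportunity bound is nearly immediate: at $\varepsilon = 0$, the constraint $\|\mathbb{E}_\rho[y]-y^\star\|_1 = 0$ forces $\hat{y}(\rho) = y^\star$, so $\mathsf{BR}(\hat{y}(\rho))^\top A y^\star = \max_{x} x^\top A y^\star$, and the payoff gap collapses to $(1-\lambda)\bigl[\max_{x} x^\top A y^\star - \bar{x}^\top A y^\star\bigr] \leq (1-\lambda)(\mu - \nu)$, using $\max_x x^\top A y^\star \leq \mu$ from $\mu_\Theta(A) \leq \mu$ and $\bar{x}^\top A y^\star \geq \nu$ from safeness. For the risk, I split the gap along the convex combination; the $(1-\lambda)$-component is again at most $(1-\lambda)(\mu-\nu)$ by the same argument. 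For the $\lambda$-component $\lambda[\max_{x} x^\top A y^\star - \mathsf{BR}(\hat{y}(\rho))^\top A y^\star]$, let $\hat{x} := \mathsf{BR}(\hat{y}(\rho))$ and $x^\star := \mathsf{BR}(y^\star)$; the optimality of $\hat{x}$ at $\hat{y}(\rho)$ gives $(x^\star - \hat{x})^\top A \hat{y}(\rho) \leq 0$, which I use to upper-bound the inner bracket by the bilinear form $(x^\star - \hat{x})^\top A (y^\star - \hat{y}(\rho))$. A H\"older-type inequality, together with the observation that $\|A\|_{\max} \leq \mu$ whenever each pure strategy of Player~2 is represented in $\Theta$, then produces a bound of order $\mu \|\hat{y}(\rho) - y^\star\|_1 \leq \mu \eta(\Theta)$.

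The main technical obstacle is twofold. First, turning the one-sided guarantee $\nu_\Theta(A) \geq \nu$ into the existence of a single uniformly safe $\bar{x}$ requires a minimax theorem and some care when $\Theta$ is itself nonconvex: I would pass to $\mathrm{conv}(\Theta)$, invoke Sion, and use convexity of $y \mapsto \max_x x^\top A y$ to argue that the resulting safety level lower-bounds $\nu$. Second, extracting the tight $\mu \eta(\Theta)$ dependence---rather than the naive $2\mu \eta(\Theta)$ that a direct H\"older estimate yields---in the best-response gap requires leveraging the optimality inequality for $\hat{x}$ to absorb the regret cross-term into the H\"older bound rather than treating it as an additive contribution; the matching-pennies calculation in Section~\ref{sec:matching_pennies} confirms this constant is tight at $\lambda = 1$. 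Combining the two pieces delivers the claimed total risk $(1-\lambda)(\mu-\nu) + \lambda \mu \eta(\Theta)$.
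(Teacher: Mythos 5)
Your proposal is essentially the paper's proof: the same convex combination $\pi(\rho)=\lambda\,\mathsf{BR}(\mathbb{E}_\rho[y])+(1-\lambda)\overline{x}$ of the best response to the mean belief and the maximin-safe strategy, and the same split of the payoff gap into a $(1-\lambda)(\mu-\nu)$ safety term plus a $\lambda$-term controlled by $\|\mathbb{E}_\rho[y]-y^\star\|_1\le\eta(\Theta)$ via a H\"older-type bound. The two technical worries you flag---extracting a uniformly safe $\overline{x}$ from the one-sided hypothesis $\nu_\Theta(A)\ge\nu$, and obtaining the constant $1$ rather than $2$ in front of $\mu\,\eta(\Theta)$---are present and handled no more rigorously in the paper's own argument, which identifies the maximin payoff of $\overline{x}$ with the minimax value $\nu$ and retains only the single cross-term $\widetilde{x}^{\top}A(\mathbb{E}_\rho[y]-y^\star)$ after its ``replacement'' step.
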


To show Theorem~\ref{thm:normal_form_upper}, we construct a mixed strategy as follows.
Denote by $\overline{x}$ as the following safe/minimax strategy of Player 1, which is the Nash equilibrium when the game is zero-sum:
\begin{align}
\label{eq:safety}
 \min_{y\in\Theta} \overline{x}^{\top} A y =   \max_{x\in\mathsf{P}_a} \min_{y\in\Theta} x^{\top} A y.
\end{align}
Motivated by the matching pennies example in Section~\ref{sec:matching_pennies}, Player 1 implements a mixed strategy $\pi(\rho) \coloneq \lambda \widetilde{x} + (1-\lambda)\overline{x}$ given the predicted belief $\rho$ as a distribution over types in $\Theta$, where $\widetilde{x}\in \mathsf{P}_a$ is a best response strategy given $\rho$ such that ${x}^{\top} A \mathbb{E}_{\rho}[y]$ is maximized. The detailed proof of Theorem~\ref{thm:normal_form_upper} is provided in Appendix~\ref{app:proof_normal_form_upper}.

Moreover, we further show the following impossibility result, indicating that the tradeoff in Theorem~\ref{thm:normal_form_upper} is tight. We relegate the proof of Theorem~\ref{thm:normal_form_lower} to Appendix~\ref{app:proof_normal_form_lower}. 
% Together, the upper and lower bounds are illustrated in Figure~\ref{fig:normal_form}.

\begin{theorem}[\textsc{NFG Impossibility}]
\label{thm:normal_form_lower}
For any $\Theta$ satisfying $\kappa(\Theta)\geq 0$, there is a payoff matrix $A\in\mathbb{R}^{a\times b}$ with $\mu_{\Theta}(A)\leq \mu$ and  $\nu_{\Theta}(A)\geq \nu$ such that for any $0\leq \lambda\leq 1$, if any mixed strategy $\pi:\mathsf{P}_{\Theta}\rightarrow\mathsf{P}_{a}$ for Player 1 misses at most $(1-\lambda)\left(\mu-\nu\right)$ opportunity, then it incurs at least $\left(\kappa(\Theta)\mu-\nu\right)\left(1+\lambda\right)$ risk. 
\end{theorem}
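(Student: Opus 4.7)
My plan is to exhibit, for any hypothesis set $\Theta$ with $\kappa(\Theta) \geq 0$, an explicit adversarial payoff matrix built from a witness pair of types and then show that every mixed strategy $\pi$ whose missed opportunity is at most $(1-\lambda)(\mu-\nu)$ must incur at least $(\kappa(\Theta)\mu-\nu)(1+\lambda)$ risk against it. First I extract from the definition~\eqref{eq:kappa} a pair $y,z\in\Theta$ attaining $\kappa(\Theta)$, and partition the column indices into $S=\{j:y_j\le z_j\}$ and $S^c=\{j:y_j>z_j\}$. Setting $p:=\sum_{j\in S} z_j$ and $q:=\sum_{j\in S} y_j$, the definition yields $\kappa(\Theta)=2p-1$ and $q<1/2$, so $z$ puts most of its mass on $S$ while $y$ puts most of its mass on $S^c$.

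Next, I design a two-row payoff matrix $A\in\mathbb{R}^{2\times b}$ whose rows reward opposite sides of the partition: take $A_{1,j}=\mu$ for $j\in S$ and $A_{1,j}=2\nu-\mu$ for $j\in S^c$, with $A_{2,j}$ the mirror image (adjoining a constant ``safe'' row $A_{3,j}\equiv\nu$ if needed to keep $\mu_\Theta(A)\le\mu$ when $\nu<0$). A direct check shows that the uniform-row strategy $\overline{x}=(1/2,1/2)$ attains exactly $\nu$ against every distribution, so $\nu_\Theta(A)\ge\nu$, while the entrywise bound yields $\mu_\Theta(A)\le\mu$. Moreover, the best response to $y$ is row $2$ and the best response to $z$ is row $1$, and the closed-form ``gaps'' $M_y-m_y$ and $M_z-m_z$ (difference between best and second-best payoffs) are both proportional to $\mu-\nu$, with the second carrying an additional factor of $\kappa(\Theta)$.

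I then apply the opportunity hypothesis to the degenerate belief $\rho=\delta_y$ with $y^\star=y$: writing $\pi(\delta_y)=(\alpha,1-\alpha)$, the inequality $\pi(\delta_y)^\top A y\ge M_y-(1-\lambda)(\mu-\nu)$ forces $\alpha(M_y-m_y)\le(1-\lambda)(\mu-\nu)$, bounding the ``wrong-row'' weight $\alpha$ by an explicit function of $\lambda$. Finally, I lower-bound the risk by taking the same prediction $\rho=\delta_y$ but the true type $y^\star=z$: the resulting gap equals $M_z-\pi(\delta_y)^\top A z=(1-\alpha)(M_z-m_z)$, and substituting the upper bound on $\alpha$ gives the claimed $(1+\lambda)(\kappa(\Theta)\mu-\nu)$ after simplification.

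The main obstacle is the algebraic reduction in this last step. A naive substitution yields a bound proportional to $\kappa(\Theta)(\mu-\nu)(1+\lambda)$, and reconciling this with the stated $(\kappa(\Theta)\mu-\nu)(1+\lambda)$ relies on $\kappa(\Theta)\le 1$ together with a careful tightening of the witness pair to drive $q$ as small as possible; in regimes where the two-row construction is loose (e.g.\ $q$ bounded away from $0$ or $\nu<0$), I would invoke the three-row augmentation and solve the resulting small LP over the mixing vector $(\alpha_1,\alpha_2,\alpha_3)$ to recover the theorem's bound, splitting into cases by the sign of $\nu$.
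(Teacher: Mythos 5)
Your proposal follows essentially the same route as the paper's proof: extract a witness pair attaining $\kappa(\Theta)$, partition the columns accordingly, build the sign-pattern matrix with entries $\nu\pm(\mu-\nu)$ (the paper writes it as $A_\mu+A_\nu$), use the opportunity constraint at one witness to upper-bound the weight the strategy can place on the ``wrong'' rows, and evaluate the risk at the other witness. The final algebraic step you flag as the main obstacle is resolved in the paper without any three-row LP: the opportunity constraint yields $\pi(\rho)^{\top}f\ge \lambda/\digamma(\Theta)$ with $\digamma(\Theta)=1-2q$, the factor $1/\digamma(\Theta)$ is then discarded using $\digamma(\Theta)\le 1$, and the resulting bound of the form $(1+\lambda)\kappa(\Theta)(\mu-\nu)$ is converted to $(\kappa(\Theta)\mu-\nu)(1+\lambda)$ using only $\kappa(\Theta)\le 1$ and $\nu\ge 0$. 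Your instinct that this step is delicate is sound --- the paper itself silently treats $\max_{x}x^{\top}Ay'$ as equal to $\mu$ (exact only when $\digamma(\Theta)=1$) and implicitly assumes a nonnegative value --- but note that your proposed repair of appending a constant row cannot restore $\mu_\Theta(A)\le\mu$ when $\nu<0$, since $\mu_\Theta(A)$ is a maximum over rows and can only grow when rows are added.
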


% \begin{figure}[ht]
% \centering
% \includegraphics[width=0.65\textwidth]{normal_form_type.png}
% \caption{Comparison of upper and lower bounds in Theorem~\ref{thm:normal_form_upper} and~\ref{thm:normal_form_lower} with varying value of the game $\nu$ and type intensity $\kappa(\Theta)$.}
% \label{fig:normal_form}
% \end{figure}

In particular, Theorem~\ref{thm:normal_form_upper} and~\ref{thm:normal_form_lower} together imply the following special case when the hypothesis set $\Theta$ contains all possible mixed strategies in $\mathsf{P}_b$. This corollary highlights that for a fair game, the tradeoff is tight and the mixed strategy $\pi(\rho) \coloneq \lambda \widetilde{x} + (1-\lambda)\overline{x}$ with $\lambda\in [0,1]$ is Pareto optimal when the hypothesis set contains all possible mixed strategies.

\begin{corollary}[\textsc{NFG Pareto Optimality}]
\label{coro:nfg}
Suppose $\Theta=\mathsf{P}_b$ and the game is fair, where Player 1 has a payoff matrix $A\in\mathbb{R}^{a\times b}$ with $\|A\|_{\max}\leq \alpha$. For any $0\leq \lambda\leq 1$, there exists a mixed strategy $\pi$ for Player 1 that misses $(1-\lambda)\alpha$ opportunity and has $(1+\lambda)\alpha$ risk. Furthermore, there is a payoff matrix $A\in\mathbb{R}^{a\times b}$ with $\|A\|_{\max}\leq \alpha$ such that for any $0\leq \lambda\leq 1$ if any mixed strategy $\pi$ for Player 1 misses $\left(1-\lambda\right)\alpha$ opportunity, then it incurs at least $\left(1+\lambda\right)\alpha$ risk. 
\end{corollary}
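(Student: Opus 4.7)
The plan is to deduce Corollary~\ref{coro:nfg} as a direct specialization of Theorem~\ref{thm:normal_form_upper} and Theorem~\ref{thm:normal_form_lower} to the setting $\Theta = \mathsf{P}_b$ with a fair game; almost all of the work is to pin down the four scalars $\mu_\Theta(A)$, $\nu_\Theta(A)$, $\eta(\Theta)$, and $\kappa(\Theta)$ in this special case and to check that the max-norm hypothesis of the corollary translates cleanly into the conditions used by the upstream theorems.

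First, I would compute the four parameters. Since $x\in\mathsf{P}_a$ and $y\in\mathsf{P}_b$ are probability vectors, H\"older's inequality gives $|x^\top A y|\leq \|A\|_{\max}$, hence $\mu_\Theta(A)\leq \alpha$; conversely, placing $x$ and $y$ at pure strategies supported on the maximum-magnitude entry of $A$ shows $\mu_{\mathsf{P}_b}(A)=\|A\|_{\max}$, so the two conditions $\|A\|_{\max}\leq \alpha$ and $\mu_\Theta(A)\leq \alpha$ are equivalent in this setting. Fairness gives $\nu_\Theta(A)=0$, so we may set $\nu=0$. The simplex diameter is $\eta(\mathsf{P}_b)=2$, attained by any pair of disjoint point masses $y=e_i, z=e_j$ with $i\neq j$. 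Finally, $\kappa(\mathsf{P}_b)=1$: the upper bound $\leq 1$ follows from $z_i\geq 0$ and $\sum_i z_i = 1$, which force the objective in~\eqref{eq:kappa} into $[-1,1]$, while equality is witnessed by $z=e_1$, $y=e_2$ (the constraint reads $0<1$ and the objective equals $z_1-z_2=1$).

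Second, for the existence half I would invoke Theorem~\ref{thm:normal_form_upper} with $\mu=\alpha$, $\nu=0$, and $\eta(\Theta)=2$, yielding the convex-combination strategy $\pi(\rho)=\lambda\widetilde{x}+(1-\lambda)\overline{x}$ with missed opportunity $(1-\lambda)\alpha$ and risk at most $(1-\lambda)\alpha+\lambda\cdot\alpha\cdot 2=(1+\lambda)\alpha$. For the impossibility half I would invoke Theorem~\ref{thm:normal_form_lower} with the same $\mu=\alpha$, $\nu=0$, and $\kappa(\Theta)=1$, which produces a payoff matrix $A$ with $\mu_\Theta(A)\leq \alpha$, hence $\|A\|_{\max}\leq \alpha$ by the equivalence noted above, such that every mixed strategy missing at most $(1-\lambda)\alpha$ opportunity must incur risk at least $(\kappa(\Theta)\mu-\nu)(1+\lambda)=(1+\lambda)\alpha$.

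The only genuinely substantive point is the identity $\mu_\Theta(A)=\|A\|_{\max}$ when $\Theta=\mathsf{P}_b$: it is what lets the corollary's max-norm hypothesis be used both as an upper bound on payoffs in the existence direction and, more critically, as a constraint on the adversarially chosen matrix in the impossibility direction (otherwise one would only know $\mu_\Theta(A)\leq\alpha$, which is a priori weaker). After this identification everything else is arithmetic: the numerics $\eta(\mathsf{P}_b)=2$ and $\kappa(\mathsf{P}_b)=1$ combine with $\mu=\alpha$, $\nu=0$ to produce the matching bounds $(1\pm\lambda)\alpha$, certifying that the segment $\{((1-\lambda)\alpha,(1+\lambda)\alpha):\lambda\in[0,1]\}$ is the exact Pareto front and that the convex-combination strategy from Theorem~\ref{thm:normal_form_upper} attains it.
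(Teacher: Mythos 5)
Your proposal is correct and follows essentially the same route as the paper: compute $\kappa(\mathsf{P}_b)=1$, $\mu_{\Theta}(A)=\|A\|_{\max}$, $\nu_{\Theta}(A)=0$ by fairness, and $\eta(\mathsf{P}_b)=2$, then specialize Theorems~\ref{thm:normal_form_upper} and~\ref{thm:normal_form_lower}. Your write-up is in fact more careful than the paper's, which omits the verification that $\mu_{\Theta}(A)=\|A\|_{\max}$ and the values of $\eta$ and $\kappa$ on the simplex.
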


It is worth noting that the requirement can be relaxed straightforwardly so that as long as the hypothesis set contains two mixed strategies such that $\kappa(\Theta)=1$ in~\eqref{eq:kappa}, the statement holds. The proof of Corollary~\ref{coro:nfg} can be found in Appendix~\ref{app:proof_normal_form_coro}.

\section{Stochastic Bayesian Games  with Untrusted Type Beliefs}
\label{sec:sbg}

In many real-world applications, agents are coupled through a shared state in stochastic Bayesian games~\cite{albrecht2015game,albrecht2016belief}, which combines standard Bayesian games~\cite{harsanyi1967games,harsanyi1968games} with the stochastic games~\cite{shapley1953stochastic}. 
In this section, we consider an infinite-horizon time-varying discounted $\nn$-player 
stochastic Bayesian game, represented by a tuple $\mathcal{M}\coloneq \langle \mathcal{S}, \mathcal{A}, \Theta, \sigma, p, r, \gamma\rangle$, and analyze the impact of beliefs on the opportunity-risk tradeoff.

\subsection{Preliminaries: MDP for Stochastic Bayesian Games}
Let $\mathcal{S}$ be a finite set of states. Write $\mathcal{N}\coloneq \{1,\ldots,\nn\}$.
Let $\mathcal{A}(i)$ be the set of finitely many actions that player $i\in\mathcal{N}$ can take at any state $s \in \mathcal{S}.$ Furthermore, $\mathcal{A}:=\mathcal{A}(1)\times\cdots\times\mathcal{A}(\nn)$ denotes the set of action profiles $a=\left(a(i):i\in\mathcal{N}\right)$ with $a(i) \in \mathcal{A}(i)$. The types of other players are unknown to $i$-th player. The set $\smash{\Theta=\prod_{j\neq i}\Theta(j)}$ is a product type space where each opponent $j$ chooses types from $\Theta_j$.
We focus on the decision-making of the $i$-th player, considering stationary (Markov) strategies.\footnote{Note that if all the other players use Markov strategies, then the considered player has a best response that is a Markov strategy. } At each time $t\geq 0$ each player $j\neq i$ uses a mixed strategy $\pi_j:\mathcal{S}\times\Theta(j)\rightarrow \mathsf{P}_{\mathcal{A}(j)}$ parameterized by a type that depends on the current state and the true type $\theta_j^{\star}\in\Theta(j)$.
We use $r: \mathcal{S}\times \mathcal{A} \rightarrow \mathbb{R}$ to denote the reward function for the $i$-th player, which is assumed to be bounded, as formally defined below.
\begin{assumption}
\label{assumption:reward_bound}
The reward $r:\mathcal{S}\times\mathcal{A}\rightarrow \mathbb{R}$ satisfies that $|r(s,a)|\leq r_{\max}$ for all $s\in\mathcal{S}$, $a\in \mathcal{A}$.
\end{assumption}
For any pair of states $(s, s')$ and action profile $a \in \mathcal{A}$, we define $p(s'|s,a)$ as a transition probability from $s$ to $s'$ given an action profile $a$. Finally, let $\gamma\in(0,1)$ be a discount factor.
% For the first player, we denote by $\pi\left(s\right)\in \mathsf{P}_{\mathcal{A}}$ the mixed strategy (distribution) for player $1$ at state $s$ and $\sigma\left(s\right)\in \mathcal{A}) \in\Delta(\mathcal{A})$ the mixed strategy for player $2$. Denote by $\pi\in \Pi$ and $\sigma\in \Pi$
% the strategies of the two players.  Correspondingly, $a_t=\left(a_t(1), a_t(2)\right)$ is the action profile at time $t\geq 0$.
We define the expected utility of the $i$-th player using a strategy $\pi_i$ as the expected discounted total payoff
\begin{align}
\label{eq:payoff}
J\left(\pi_i;\theta_{-i}^{\star}\right):=\mathbb{E}\left [\sum_{t=0}^{\infty} \gamma^t r\left(s_t, a_t\right)\right ],
\end{align}
where $\theta_{-i}^{\star}\coloneq (\theta_j^{\star}\in\Theta(j):j\neq i)$,
with respect to stochastic processes $\left (s_t \sim p\left(\cdot \mid s_{t-1}, a_{t-1}\right)\right)_{t>0}$ and $\left(a_t(i) \sim \pi_i\left(s_t\right)\right)_{t \geq 0}$, which generate the state and the action trajectories. The expectation in~\eqref{eq:payoff} is taken with respect to all randomness induced by the initial state distribution $s_0 \sim p_0 \in \mathsf{P}_{\mathcal{S}}$, the state transition kernel $p$, and strategy profile $\pi$ (as well as opponents' types in $\Theta$).

\subsection{Opportunity, Risk, and Type Beliefs}

Suppose the $i$-th player has beliefs of the other players' types, provided by a machine-learned forecaster, denoted by $\theta_{-i}$. For notational simplicity, we write $\pi_i$, the strategy for the $i$-th player as $\pi$, and the strategies $\pi_{-i}$ for the opponents as $\sigma$ (parameterized by $\theta_{-i}$). Furthermore, we omit the subscripts and denote the type beliefs and true types  as $\theta$ and $\theta^{\star}$, and write the payoff in~\eqref{eq:payoff} as $J\left(\pi\right)$ if there is no ambiguity.
The $i$-th player uses a strategy $\pi:\mathcal{S}\times\Theta\rightarrow \mathsf{P}_{\mathcal{A}(i)}$, which is a function of the type beliefs of the other players. 
Given a strategy $\pi$ used by the agent and strategies by other players, denoted by $\sigma$, 
we  define the following useful value functions of the game:
\begin{align}
\label{eq:value_function_pi}
V^{\pi,\sigma}(s) \coloneq\mathbb{E}_{p,\pi,\sigma}\left [\sum_{\tau=t}^{\infty} \gamma^{\tau-t} r\left(s_t, a_t\right) | s_t=s \right],
\end{align}
which satisfies the Bellman equation
$
V^{\pi,\sigma}(s) = \mathbb{E}_{a\sim\pi(s),a'\sim\sigma(s)} \left[\left(r + \gamma\mathbb{P} V^{\pi,\sigma}\right)(s,(a,a'))\right],
$
where we define an operator $\mathbb{P} V^{\pi}(s,(a,a'))\coloneq \mathbb{E}_{s'\sim p(\cdot|s,(a,a'))}\left[V^{\pi,\sigma}(s')\right]$. 

The following worst-case payoff gap is defined similarly as the one in~\eqref{eq:gap_normal_form} for normal-form games, which in our context can be considered as the dynamic regret for a strategy $\pi_i$ against an optimal strategy. Note that an optimal strategy is also a best response strategy $\pi^{\star}$ knowing the true types of all players in hindsight maximizing~\eqref{eq:value_function_pi}. The optimal value function given $\sigma$ and $\pi^{\star}$ is denoted by $V^{\star,\sigma}(s)$, whose Bellman optimality equation is
$
V^{\star,\sigma}(s) = \max_{a\in\mathcal{A}} \left(r + \gamma\mathbb{P}^{\sigma} V^{\star,\sigma}\right)(s,a).
$

\begin{definition}
\label{def:payoff_sbg}
% Fix some metric $d:\Theta'\times\Theta'\rightarrow \mathbb{R}_+$ between types $\theta_{-i}$ and $\theta_{-i}^{\star}$ where $\Theta'\coloneq \prod_{j\neq i}\Theta_{j}$.
Given a stochastic Bayesian game $\mathcal{M}= \langle \mathcal{S}, \mathcal{A}, \Theta, \sigma, p, r, \gamma\rangle$,
for a fixed strategy $\pi$, the payoff gap is defined as 
\begin{align}
\label{eq:epsilon_dynamic_regret}
\Delta_{\mathsf{SBG}}(\varepsilon;\pi)\coloneq \sup_{d\left(\theta,\theta^{\star}\right)\leq \varepsilon}\sup_{s_0\in\mathcal{S}} \Big(V^{\pi^{\star},\sigma(\theta^{\star})}(s_0) - V^{\pi(\theta),\sigma(\theta^{\star})}(s_0)\Big),
\end{align}
where $s_0$ denotes an initial state and $d\left(\theta,\theta^{\star}\right)\coloneq\max_{s\in\mathcal{S}}\left\|\sigma(s;\theta)-\sigma(s;\theta^{\star})\right\|_1\leq \varepsilon$.
\end{definition}

Similar to normal-form Bayesian games, we define the (missed) opportunity and risk below.
\begin{definition}
Given a payoff gap $\Delta_{\mathsf{SBG}}(\varepsilon;\pi)$ in~\eqref{eq:epsilon_dynamic_regret}, the \textit{(missed) opportunity} of $\pi$ is 
$\beta(\pi)\coloneq\Delta_{\mathsf{SBG}}(0;\pi)$ and the \textit{risk} is $\delta(\pi)\coloneq\max_{\varepsilon>0}\Delta_{\mathsf{SBG}}(\varepsilon;\pi)$
% Moreover, a strategy $\pi$ is $\beta$-foregone if  $\Delta_{\mathsf{SBG}}(0;\pi)\leq \beta(\pi)$ and $\delta$-risky if $\max_{\varepsilon>0}\Delta_{\mathsf{SBG}}(\varepsilon;\pi)\leq \delta(\pi)$.
\end{definition}

Our goal is to characterize a tradeoff between the opportunity and risk for stochastic Bayesian games. Finally, we define the value of the game.
\begin{definition}
    Given a hypothesis set $\Theta$ and a stochastic Bayesian game $\mathcal{M}= \langle \mathcal{S}, \mathcal{A}, \Theta, \sigma, p, r, \gamma\rangle$, we denote the value of $\mathcal{M}$ by
    \begin{align}
\label{eq:sbg_minimiax}
\nu_{\Theta}(\mathcal{M})\coloneq \max_{\pi\in\Pi} \min_{\theta_{-i}\in\Theta} J(\pi;\theta_{-i}) \ \ \textbf{(value)}.
\end{align}
\end{definition}

\vspace{-7pt}

\subsection{Opportunity-Risk Tradeoff for Stochastic Bayesian Games}

\vspace{-3pt}

% \begin{theorem}
% \label{thm:sbg_upper}
% Consider a stochastic Bayesian game $\mathcal{M}= \langle \mathcal{S}, \mathcal{A}, \Theta, \sigma, p, r, \gamma\rangle$ with $\nu_{\Theta}(\mathcal{M})\geq \nu$. There exists a strategy $\pi$ whose opportunity and risk costs satisfy
% \begin{align*}
%     \beta(\pi)\leq  \frac{1}{1-\lambda\gamma} \left(C_1(\gamma)-\gamma\nu\right)(1-\lambda), \ \text{with } C_1(\gamma)&\coloneq \left(\frac{\gamma^2-3\gamma+6}{(1-\gamma)^2}\right)r_{\max},\\
%     \delta(\pi)\leq  \frac{1}{1-\lambda\gamma}\Big(\left(C_2(\gamma)-\gamma\nu\right)(1-\lambda) + C_3(\gamma)\lambda \Big), \ \text{with } C_2(\gamma)&\coloneq \left(\frac{\gamma^2-3\gamma+2}{1-\gamma}\right)r_{\max}, \\  \text{and } C_3(\gamma)& \coloneq \left(\frac{4-4\gamma^2+2}{(1-\gamma)^2}\right)r_{\max}.
% \end{align*}
% \end{theorem}

\begin{figure}[t]
\centering
\includegraphics[width=0.6\textwidth]{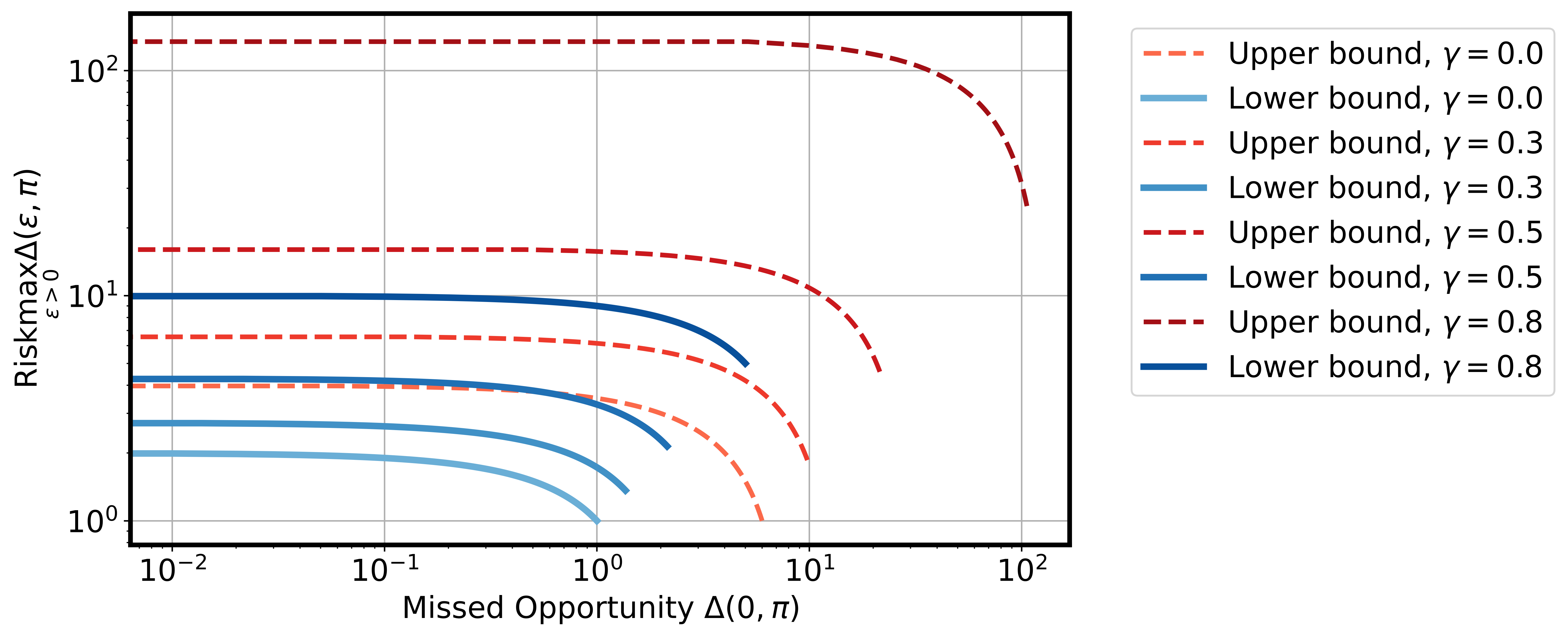}
% \caption{Illustration of the tradeoff in a normal-form game between opportunity and risk with $\alpha=5$ and $V\in \{1,2,\ldots,5\}$.}
\caption{Comparison of lower and upper bounds with a varying discount factor $\gamma$.}
\label{fig:sbg}
\end{figure}

Based on the definitions above, our first result states an upper bound on the opportunity and risk for stochastic Bayesian games. 
Proving the bounds in Theorem~\ref{thm:sbg_upper} is nontrivial because the players are coupled by a shared state. Consequently, the analysis used in Section~\ref{sec:nfg} for a simple convex combination of the best response to type beliefs $\theta$ and a safe strategy in normal-form games cannot be directly applied to the stochastic setting.

\begin{theorem}[\textsc{SBG Existence}]
\label{thm:sbg_upper}
Consider a stochastic Bayesian game $\mathcal{M}= \langle \mathcal{S}, \mathcal{A}, \Theta, \sigma, p, r, \gamma\rangle$ with $\nu_{\Theta}(\mathcal{M})\geq \nu$. For any $0\leq\lambda\leq 1$, there exists a strategy $\pi$ whose (missed) opportunity and risk satisfy
\begin{align*}
    \beta(\pi)\leq &\frac{1}{1-\lambda\gamma} \left(C_1(\gamma)r_{\max}-\gamma\nu\right)(1-\lambda), \ \text{with } C_1(\gamma)\coloneq \frac{\gamma^2-3\gamma+6}{(1-\gamma)^2},\\
    \delta(\pi)\leq & \frac{1}{1-\lambda\gamma}\left(C_2(\gamma)r_{\max}-\gamma\nu\right)(1+\lambda), \ \text{with } C_2(\gamma)\coloneq \max\left(C_3(\gamma),C_4(\gamma)\right), \\ &  \ C_3(\gamma)  \coloneq \frac{\gamma^2-3\gamma+2}{1-\gamma} \text{ and } C_4(\gamma) \coloneq \frac{2-2\gamma^2+1}{(1-\gamma)^2},
\end{align*}
where $r_{\max}$ is defined in Assumption~\ref{assumption:reward_bound}.
\end{theorem}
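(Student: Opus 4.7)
My plan is to generalize the convex-combination construction of Theorem~\ref{thm:normal_form_upper} to the stochastic setting. Let $\widetilde{\pi}(\theta)$ be a Markov best-response policy against $\sigma(\theta)$, and let $\overline{\pi}$ be a safe Markov policy attaining the value in~\eqref{eq:sbg_minimiax}, so that $V^{\overline{\pi},\sigma(\theta)}(s)\geq \nu$ uniformly in $\theta$ and $s$. I would define $\pi$ to be the stationary Markov policy that, at each visited state, plays $\widetilde{\pi}(\theta)$ with probability $\lambda$ and $\overline{\pi}$ with probability $1-\lambda$. Its Bellman operator then decomposes as $T_{\pi}V = \lambda T_{\widetilde{\pi}}V + (1-\lambda)T_{\overline{\pi}}V$, and this algebraic identity will drive both bounds.

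\emph{Opportunity.} When $\theta=\theta^\star$, we have $\widetilde{\pi}(\theta)=\pi^\star$, so subtracting the Bellman equation of $V^{\pi,\sigma(\theta^\star)}$ from that of $V^{\pi^\star,\sigma(\theta^\star)}$ yields
\[
V^{\pi^\star,\sigma(\theta^\star)} - V^{\pi,\sigma(\theta^\star)} = \lambda\gamma\,\mathbb{P}_{\pi^\star}\bigl(V^{\pi^\star,\sigma(\theta^\star)} - V^{\pi,\sigma(\theta^\star)}\bigr) + (1-\lambda)\bigl(T_{\pi^\star}V^{\pi^\star,\sigma(\theta^\star)} - T_{\overline{\pi}}V^{\pi,\sigma(\theta^\star)}\bigr).
\]
Taking sup-norm and rearranging produces the factor $(1-\lambda)/(1-\lambda\gamma)$. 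The residual is then bounded by separating the one-step reward gap from the propagated value gap, using the uniform upper bound $V^{\pi^\star,\sigma(\theta^\star)}\leq r_{\max}/(1-\gamma)$ together with a lower envelope for $V^{\pi,\sigma(\theta^\star)}$ derived from the safety of $\overline{\pi}$; this is where the $-\gamma\nu$ contribution and the constant $C_1(\gamma)$ arise.

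\emph{Risk.} For arbitrary $\theta$, I would split
\[
V^{\pi^\star,\sigma(\theta^\star)} - V^{\pi,\sigma(\theta^\star)} = \bigl(V^{\pi^\star,\sigma(\theta^\star)} - V^{\overline{\pi},\sigma(\theta^\star)}\bigr) + \bigl(V^{\overline{\pi},\sigma(\theta^\star)} - V^{\pi,\sigma(\theta^\star)}\bigr),
\]
bound the first summand by $r_{\max}/(1-\gamma)-\nu$ using safety of $\overline{\pi}$, and handle the second by the same Bellman recursion with contraction coefficient $\lambda\gamma$, except that now $\widetilde{\pi}(\theta)$ may perform arbitrarily badly against $\sigma(\theta^\star)$. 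Under worst-case $\theta$ the exploitative and safe residual pieces stack in the same direction, producing the multiplier $(1+\lambda)$ instead of $(1-\lambda)$; two distinct bounding regimes, tight respectively near $\gamma\to 0$ and $\gamma\to 1$, yield $C_3(\gamma)$ and $C_4(\gamma)$, whose maximum is the stated $C_2(\gamma)$.

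\emph{Main obstacle.} The principal difficulty, absent in the normal-form setting of Theorem~\ref{thm:normal_form_upper}, is that the mixture policy couples the two players through the state dynamics, so the per-step residuals must be propagated through an infinite-horizon Bellman recursion. The crucial observation is that only the $\lambda$-fraction of $T_\pi$ matches $T_{\pi^\star}$, so the effective contraction coefficient against $V^{\pi^\star}$ is $\lambda\gamma$ rather than $\gamma$; this sharper geometric factor $1/(1-\lambda\gamma)$ is precisely what preserves the $(1\mp\lambda)$ multipliers carried over from the normal-form construction. Extracting the precise constants $C_1(\gamma)$ and $C_2(\gamma)$ then requires a careful two-sided envelope on $V^{\pi,\sigma(\theta^\star)}$ in terms of both $\nu$ and $r_{\max}$ together with a case split for the risk bound, and this is where most of the bookkeeping concentrates.
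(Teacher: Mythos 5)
Your construction is not the paper's, and the difference is load-bearing. The paper's strategy \eqref{eq:sbg_policy} is value-based: at each state it plays an action distribution maximizing $a^{\top}\left(\lambda R_{\widetilde{V}}(s)\widetilde{\sigma}(s)+(1-\lambda)R_{\overline{V}}(s)\overline{\sigma}(s)\right)$, i.e.\ it best-responds to a $\lambda$-weighted combination of the two $Q$-vectors. Your policy is a probability mixture $\lambda\widetilde{\pi}(\theta)+(1-\lambda)\overline{\pi}$, which is a different object (the argmax of a convex combination is not the convex combination of the argmaxes), and the paper explicitly warns that the simple convex-combination analysis of Section~\ref{sec:nfg} does not carry over to the stochastic setting. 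The whole of Appendix~\ref{app:proof_sbg_upper} rests on the one-step optimality inequality \eqref{eq:reward_bound_0} in Lemma~\ref{lemma:reward bound}, which is the defining property of the argmax policy and is not satisfied by your mixture. That said, your opportunity argument is internally coherent: when $\theta=\theta^{\star}$ the identity $V^{\pi^{\star}}-V^{\pi}=\lambda\gamma\,\mathbb{P}_{\pi^{\star}}\bigl(V^{\pi^{\star}}-V^{\pi}\bigr)+(1-\lambda)\bigl(T_{\pi^{\star}}V^{\pi^{\star}}-T_{\overline{\pi}}V^{\pi}\bigr)$ is correct and does yield the factor $(1-\lambda)/(1-\lambda\gamma)$, so that half is plausibly salvageable (modulo matching the stated constant $C_1$).

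The genuine gap is in the risk bound. The ``effective contraction coefficient $\lambda\gamma$'' exists only because the $\lambda$-component of your mixture coincides with $\pi^{\star}$; in the risk regime $\theta\neq\theta^{\star}$, $\widetilde{\pi}(\theta)\neq\pi^{\star}$, neither component matches $\pi^{\star}$, and that recursion is unavailable. Your fallback through $V^{\overline{\pi},\sigma(\theta^{\star})}$ gives instead
\begin{align*}
V^{\overline{\pi}}-V^{\pi}=(1-\lambda)\gamma\,\mathbb{P}_{\overline{\pi}}\bigl(V^{\overline{\pi}}-V^{\pi}\bigr)+\lambda\bigl(T_{\overline{\pi}}V^{\overline{\pi}}-T_{\widetilde{\pi}}V^{\pi}\bigr),
\end{align*}
whose contraction coefficient is $(1-\lambda)\gamma$, producing a factor $\lambda/(1-(1-\lambda)\gamma)$ rather than the stated $(1+\lambda)/(1-\lambda\gamma)$. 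You would then still have to show the resulting expression is dominated by the theorem's bound for all $\lambda,\gamma,\nu$; you do not, and it is not obvious (it needs at least $\nu\leq r_{\max}/(1-\gamma)$ plus a case analysis, and your sketch of where $C_3,C_4$ come from does not engage with this mismatch). A second, smaller hole: the ``lower envelope for $V^{\pi,\sigma(\theta^{\star})}$ derived from the safety of $\overline{\pi}$'' does not transfer, because the mixture policy is not $\overline{\pi}$ and $\nu_{\Theta}(\mathcal{M})\geq\nu$ in \eqref{eq:sbg_minimiax} constrains only the payoff from the initial distribution, not $V^{\pi,\sigma(\theta^{\star})}(s)$ uniformly in $s$. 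The paper sidesteps both issues by exploiting the optimality of \eqref{eq:sbg_policy} and by lower-bounding only $\overline{V}=V^{\star,\overline{\sigma}}$ in terms of $\nu$.
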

The proof of Theorem~\ref{thm:sbg_upper} is given in Appendix~\ref{app:proof_sbg_upper}, which constructs the following strategy.

Given type beliefs $\theta$, we denote a parameterized strategy $\smash{\widetilde{\sigma}\coloneq \sigma(\theta)}$, and let $\smash{\overline{\sigma}=\sigma(\overline{\theta})}$ be the safe strategies of the opponents, with $\smash{\overline{\theta}}$ being an optimal solution of the minimax optimization in~\eqref{eq:sbg_minimiax}.
Denoting $\smash{\widetilde{V}\coloneq V^{\star,\widetilde{\sigma}}}$ and $\smash{\overline{V}\coloneq V^{\star,\overline{\sigma}}}$ the optimal value functions with the opponents' strategies being $\smash{\widetilde{\sigma}}$ and $\smash{\overline{\sigma}}$ respectively. 
Unlike the convex combination used to show Theorem~\ref{thm:normal_form_upper} for normal-form Bayesian games,  the strategy constructed for proving Theorem~\ref{thm:sbg_upper} is a value-based strategy below
\begin{align}
\label{eq:sbg_policy}   \pi\left(\theta;s\right) \in \argmax_{a\in\mathsf{P}_{\mathcal{A}(i)}} a^{\top} \left(\lambda R_{\widetilde{V}}(s)\widetilde{\sigma}(s)+(1-\lambda)R_{\overline{V}}(s)\overline{\sigma}(s)\right),
\end{align}
where $\lambda$ is a parameter that indicates the level of trusts on type beliefs, and for a fixed  state $s\in\mathcal{S}$ and a given value function $V:\mathcal{S}\rightarrow\mathbb{R}$, $R_{V}$ is an $\mathcal{A}(i)\times \prod_{j\neq i}\mathcal{A}(j)$ matrix defined as
\begin{align*}
    R_{V}\left(\left(a_i,a_{-i}\right);s\right) \coloneq r\left(s,\left(a_i,a_{-i}\right)\right) + \gamma\sum_{s'\in\mathcal{S}}p\left(s'|s,\left(a_i,a_{-i}\right)\right)V(s).
\end{align*}
As Theorem~\ref{thm:sbg_upper} indicates, this design leads a safe and exploitative play, whose efficacy is validated through case studies in Section~\ref{sec:exp}.
Finally, the following lower bounds can be derived, implying that the upper bounds on $\beta(\pi)$ and $\delta(\pi)$ are tight and the value-based strategy formed in~\eqref{eq:sbg_policy} are Pareto optimal up to 
multiplicative constants. Together, the theoretical bounds are illustrated in Figure~\ref{fig:sbg}.

\begin{theorem}[\textsc{SBG Impossibility}]
\label{thm:sbg_lower}
 There is a stochastic Bayesian game $\mathcal{M}= \langle \mathcal{S}, \mathcal{A}, \Theta, \sigma, p, r, \gamma\rangle$ whose value satisfies $\nu_{\Theta}(\mathcal{M})\geq \nu$ such that for any $0\leq\lambda\leq 1$, if any strategy $\pi$ misses at most $\frac{1}{1-\lambda\gamma}\left(r_{\max}-\nu\right)(1-\lambda)$ opportunity, then it incurs at least $\frac{1}{1-\lambda\gamma}\left(r_{\max}-\nu\right)(1+\lambda)$ risk.
\end{theorem}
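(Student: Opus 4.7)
The plan is to reduce Theorem 4.2 to the normal-form impossibility result (Theorem 3.2) via a single-state stochastic Bayesian game whose stage game is a scaled matching-pennies matrix, and then convert per-stage bounds to total-discounted bounds through the geometric series $\sum_t \gamma^t = 1/(1-\gamma)$, together with an algebraic change of parameter that turns the natural $1/(1-\gamma)$ factor into the weaker claimed $1/(1-\lambda\gamma)$ factor.

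First I would construct $\mathcal{M}$ with a single state $s$, identical action sets $\mathcal{A}(i)=\mathcal{A}(-i)=\{H,T\}$ for both players, and trivial transition kernel returning to $s$. The stage matrix $A$ is a matching-pennies matrix scaled so that $\mu_{\Theta}(A)\leq r_{\max}$, together with (if needed) an extra ``safe'' action row giving a constant payoff $\nu(1-\gamma)$ so that the value of $A$ is at least $\nu(1-\gamma)$ and hence $\nu_{\Theta}(\mathcal{M})\geq \nu$. The hypothesis set $\Theta$ is chosen to be the two pure strategies of the opponent, for which a direct computation using Definition 3.1 gives $\kappa(\Theta)=1$. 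Because $|\mathcal{S}|=1$ and strategies are Markov, any strategy $\pi$ reduces to a single mixed action at $s$, and the SBG payoff gap satisfies $\Delta_{\mathsf{SBG}}(\varepsilon;\pi)=\frac{1}{1-\gamma}\Delta_{\mathsf{NFG}}(\varepsilon;\pi)$ for all $\varepsilon\geq 0$, so opportunity and risk in the SBG are the per-stage NFG quantities multiplied by $1/(1-\gamma)$.

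Next I would apply Theorem 3.2 to the stage game with parameters $\mu = r_{\max}$, stage value $\nu_{\mathrm{stage}} = \nu(1-\gamma)$, and $\kappa(\Theta)=1$: for any $0\leq\lambda'\leq 1$, any stage strategy missing at most $(1-\lambda')(r_{\max}-\nu(1-\gamma))$ opportunity must incur at least $(1+\lambda')(r_{\max}-\nu(1-\gamma))$ risk. Given the hypothesis of Theorem 4.2 that $\beta(\pi)\leq \tfrac{(1-\lambda)(r_{\max}-\nu)}{1-\lambda\gamma}$, I choose $\lambda'$ so that the NFG opportunity bound $(1-\lambda')(r_{\max}-\nu(1-\gamma))/(1-\gamma)$ matches the SBG hypothesis (after suitably writing $r_{\max}-\nu(1-\gamma)$ in terms of $r_{\max}-\nu$, which for the $\nu=0$ case is immediate and for general $\nu$ requires a small payoff shift embedded in the safe-row construction). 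I would then rescale the resulting per-stage risk lower bound by $1/(1-\gamma)$ to obtain a bound on $\delta(\pi)$ and verify it dominates $\tfrac{(1+\lambda)(r_{\max}-\nu)}{1-\lambda\gamma}$; the required inequality simplifies to $(1-\lambda')(1-\lambda\gamma)\leq (1-\lambda)(1-\gamma)$, which after substitution reduces to the trivial $\gamma(1-\lambda)\geq 0$.

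The main obstacle I anticipate is precisely this algebraic change of parameter between the stage-game normalization (whose natural bound involves $r_{\max}-\nu_{\mathrm{stage}}$ and a $1/(1-\gamma)$ factor) and the SBG statement (involving $r_{\max}-\nu$ and a $1/(1-\lambda\gamma)$ factor); the single-state construction in fact yields a strictly stronger tradeoff than Theorem 4.2 asserts, and the entire content of the proof is showing that this stronger curve dominates the stated curve pointwise in $\lambda$. A secondary subtlety is making the construction work for arbitrary $\nu\in[-r_{\max},r_{\max}]$, which is handled by including a safe action row of constant payoff $\nu(1-\gamma)$ without disturbing the $\kappa(\Theta)=1$ property, so that the NFG impossibility bound applies verbatim with $\mu=r_{\max}$ and $\nu_{\mathrm{stage}}=\nu(1-\gamma)$.
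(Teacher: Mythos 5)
Your proposal is correct and follows essentially the same route as the paper: reduce to a stateless (single-state) stochastic Bayesian game whose stage game is the adversarial matrix from Theorem~\ref{thm:normal_form_lower} with $\kappa(\Theta)=1$, use the geometric series to identify the SBG payoff gap with $1/(1-\gamma)$ times the normal-form gap (the paper equivalently takes $\mu_\Theta(A)\le r_{\max}/(1-\gamma)$ directly), and then show the resulting stronger $(1-\gamma)$-tradeoff dominates the stated $(1-\lambda\gamma)$-tradeoff. The only cosmetic difference is that you reparametrize $\lambda\mapsto\lambda'$ to match the opportunity hypothesis exactly, whereas the paper keeps the same $\lambda$ and invokes monotonicity of the ``misses at most'' condition; both yield the claim.
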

In Theorem~\ref{thm:sbg_lower}, the bounds on (missed) opportunity and risk differ from those in Theorem~\ref{thm:sbg_upper} by multiplicative constants.
Theorem~\ref{thm:sbg_lower} can be derived by applying Theorem~\ref{thm:normal_form_lower} for normal form games to a stateless MDP.
We relegate the proof of Theorem~\ref{thm:sbg_lower} to Appendix~\ref{app:proof_sbg_lower}.

\section{Case Studies}
\label{sec:exp}

\begin{figure}[t]
\centering
\includegraphics[width=0.7\textwidth]{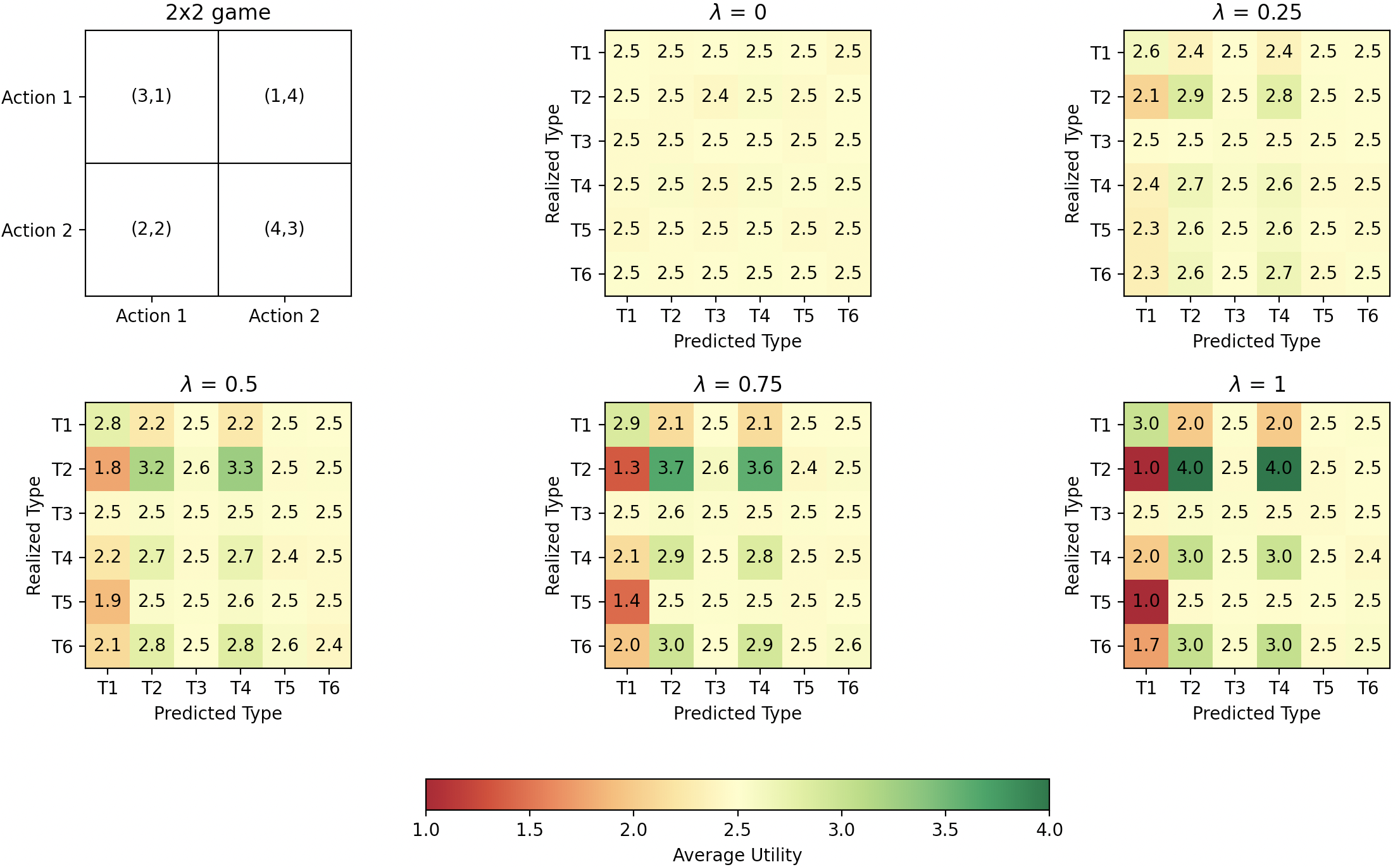}
% \caption{Illustration of the tradeoff in a normal-form game between opportunity and risk with $\alpha=5$ and $V\in \{1,2,\ldots,5\}$.}
\caption{Comparison of average payoff for a player when varying values of $\lambda$ and 6 potential discrete types for an instantiation of a $2 \times 2$ game.}
\label{fig:2x2_experiment}
\end{figure}

\textbf{$2\times 2$ Game.}
We provide a set of numerical evaluations to showcase the opportunity-risk tradeoff across a range of game theoretic scenarios. We begin with a focus on $2 \times 2$ games and to that end we sample games spanning the topology of $2 \times 2$ matrix games as shown in \cite{Bruns_2015}. This topology includes the comprehensive set of 78 strictly ordinal $2 \times 2$ games introduced by \cite{rapoportguyer1966}. We formulate our evaluation as a single state Stochastic Bayesian Game with the state being a specific $2 \times 2$ game. This evaluation is helpful in providing a concrete illustration of the tradeoff dynamics on a wide range of games within a particular topology given the comprehensive bench-marking done on the set of $2 \times 2$ games. 

We make use of 3 broad type classes (Markovian, Leader-Follower-Trigger-Agents, and Co-evoloved Neural Networks) inspired from work by \cite{albrecht2015belief}. It is from these classes that we construct our types which are then used in simulations with a player leveraging a strategy that is a convex combination of a safe strategy and the best response give type beliefs or predictions. Figs \ref{fig:2x2_experiment} and \ref{fig:tradeoff} shows the tradeoff in one particular game as an agent moves from being fully robust as indicated by $\lambda = 0$ to fully trusting of the advice when $\lambda = 1$. In Appendix \ref{app:experiments}, we provide further information on the particular types as well as illustrations on a couple of other games sampled from the topology of $2 \times 2$ games to showcase the range of tradeoffs which exist in this landscape. 

\textbf{Security Game.} Extending our empirical study to real world settings, we also provide an evaluation of the opportunity-risk tradeoff using data from wildlife tracking studies. Security games are a clear domain wherein the tradeoff between opportunity and risk is very critical to the application. Building on a large body of work that has sought to understand the implications of game theoretic analysis in the environmental conservation domain, we formulate and explore the opportunity-risk tradeoff using data from real world wildlife movements. In particular, we formulate a green security game motivated by works such as \cite{fang2015when} wherein we simulate a defender who is trying to protect an elephant population from attackers who are illegal poachers. The defender and attacker engage a multi-state Stochastic Bayesian Game where the states are constructed from historic elephant movement data sourced from \cite{Movebank}. Our evaluation shows the practicality and wide ranging impact of this investigation. Figure \ref{fig:elephant} shows the tradeoff for this setting evaluated for each particular type. We include implementation details on this game in Appendix \ref{app:experiments}.

\begin{figure}[t]
\centering
\includegraphics[width=0.7\textwidth]{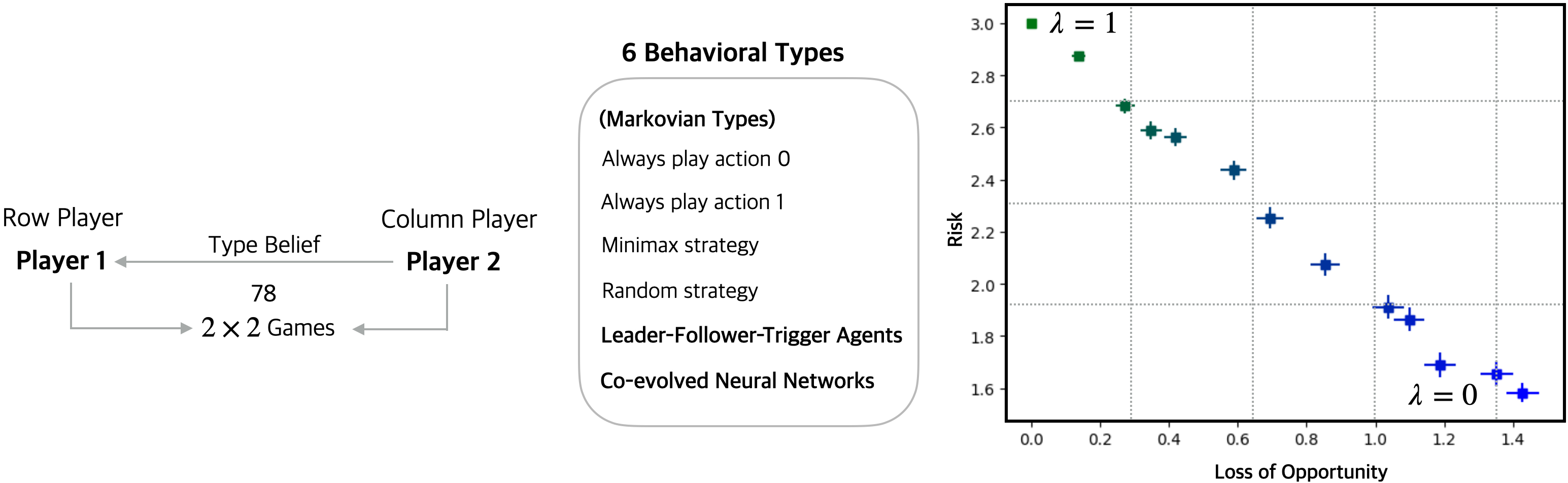}
% \caption{Illustration of the tradeoff in a normal-form game between opportunity and risk with $\alpha=5$ and $V\in \{1,2,\ldots,5\}$.}
\caption{\textbf{Left}: $2\times 2$ games considered in our case study; \textbf{Right}: Opportunity-risk tradeoff in the evaluation of a $2 \times 2$ game using an algorithm that has varying trust of type beliefs in 1,000 random runs. Fully trusting  ($\lambda=1$) and distrusting ($\lambda=0$) type beliefs yield a best response strategy and a minimax strategy correspondingly.}
\label{fig:tradeoff}
\end{figure}
\begin{figure}[t]
\centering
\includegraphics[width=0.7\textwidth]{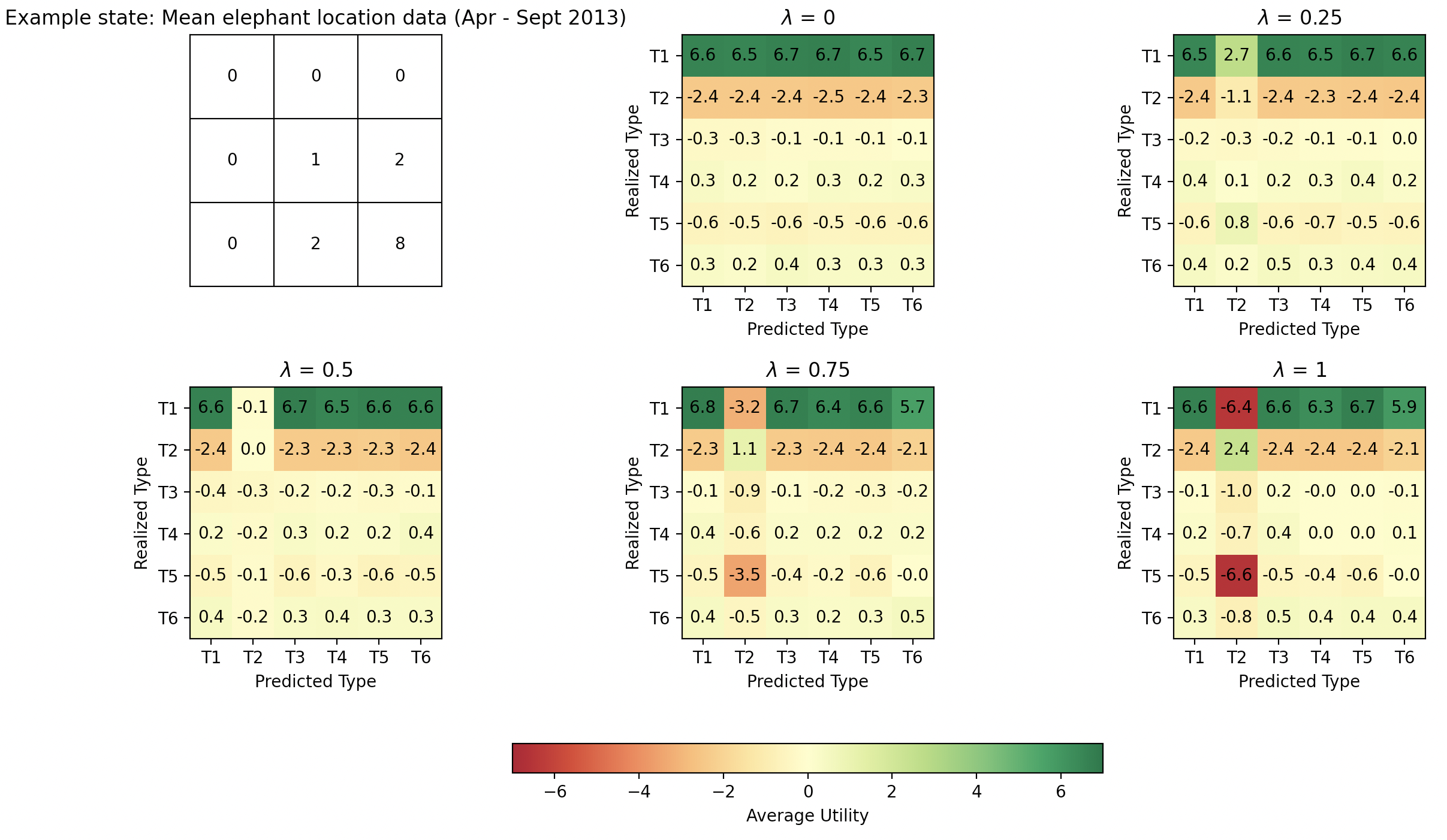}
% \caption{Illustration of the tradeoff in a normal-form game between opportunity and risk with $\alpha=5$ and $V\in \{1,2,\ldots,5\}$.}
\caption{Comparison of average payoff for the defender in a security game protecting an elephant population against illegal poachers when varying values of $\lambda$ and 6 potential discrete attacker types.}
\label{fig:elephant}
\end{figure}

% Three behavior classes
% are Leader-Follower-Trigger Agents (LFT), Co-Evolved Decision Trees (CDT), and Co-Evolved
% Neural Networks (CNN).

% Prior beliefs.

\vspace{-6pt}

\section{Concluding Remarks}

\vspace{-3pt}

In this study, we explored the fundamental limits of safe and exploitative strategies within both normal-form and stochastic Bayesian games, where pre-established type beliefs about opponents are considered. Given that these type beliefs may be inaccurate, relying on them to exploit opponents can result in high-risk strategies. Conversely, not leveraging these beliefs yields overly cautious play, leading to missed opportunities. We have quantified these dynamics by providing upper and lower bounds on the payoff gaps corresponding to different type belief inaccuracies, thereby characterizing the tradeoffs between opportunity and risk. These bounds are consistent up to multiplicative constants. 

\textbf{Limitations and Future Directions.} In our current problem setting, the dynamics of a stochastic Bayesian game is assumed to be stationary, aligning with the canonical models in the related literature (for example~\cite{albrecht2015belief}). To address this limitation,
we plan to extend our framework to include time-varying type beliefs, addressing the absence of analysis for MDPs with dynamic transition probabilities and reward structures. Additionally, refining the opportunity and risk bounds to make them tighter would provide more precise strategic insights.

\newpage

\addcontentsline{toc}{section}{Bibliography}
\bibliographystyle{plainnat}
{\bibliography{main}}

\newpage

\appendix

\textbf{Broader Impacts.} The implications of our research extend beyond theoretical interests and have practical significance in fields such as economics, cybersecurity, and strategic planning, where decision-making under uncertainty is crucial. By improving the understanding of how predictive information can be used safely and effectively in competitive environments, our work supports the development of more robust strategies in these areas. This can lead to better risk management practices and enhance the ability of systems to make informed decisions even when faced with unreliable or incomplete information. However, there are potential negative impacts, such as the risk of misuse of predictive models that may lead to biased or unfair decisions if the underlying data or assumptions are flawed. Future research could focus on reducing these risks, ensuring AI systems remain reliable and safe.

\section{Proof of Theorem~\ref{thm:normal_form_upper}}
\label{app:proof_normal_form_upper}

We first consider bounding the opportunity, focusing on the case when the predicted belief contains no error, i.e., $y^{\star}=\mathbb{E}_{\rho}[y]$. Then, by the definition of $\pi(\rho)$, for any payoff matrix $A$ with $\mu_{\Theta}(A)\leq \mu$ and $\nu_{\Theta}(A)\geq \nu$, we get
\begin{align}
\nonumber
  \Delta_{\mathsf{NFG}}(0;\pi) =&  \max_{y^{\star}\in \Theta} \left(\max_{x\in \mathsf{P}_a}x^{\top} A y^{\star} - \pi(\rho) A y^{\star}\right) \\
  % \nonumber
  % =& (1-\lambda) \max_{A:\|A\|_{\max}\leq \alpha} \max_{y^{\star}\in \Theta} \left(\max_{x\in \mathsf{P}_a}x^{\top} A y^{\star} - \overline{x}^{\top} A y^{\star}\right)\\
\label{eq:opp_1}
    \leq & (1-\lambda)\left( \max_{x\in\mathsf{P}_a}\max_{y^{\star}\in\Theta} x^{\top} A y^{\star} - \min_{y^{\star}\in \Theta}\overline{x}^{\top} A y^{\star}\right),
\end{align}
where we have maximized the two terms $\max_{x\in \mathsf{P}_a}x^{\top} A y^{\star}$ and $\overline{x}^{\top} A y^{\star}$ separately over $y^{\star}\in \Theta$ to obtain~\eqref{eq:opp_1}. Therefore, noting the definition of the safe strategy $\overline{x}$ in~\eqref{eq:safety}, 
\begin{align}
\label{eq:opp}
  \Delta_{\mathsf{NFG}}(0;\pi) \leq & (1-\lambda)\left( \max_{x\in\mathsf{P}_a}\max_{y^{\star}\in\Theta} x^{\top} A y^{\star} -  \max_{x\in\mathsf{P}_a} \min_{y^{\star}\in\Theta} x^{\top} A y^{\star} \right)
    \leq (1-\lambda)\left(\mu-\nu\right).
    % = &(1-\lambda) \max_{y\in \Delta_n} \left(y^{\top} A (x^*(y)-\overline{x})\right)
\end{align}
% where $V=\max_{x\in\mathsf{P}_a} \min_{y^{\star}\in\Theta} x^{\top} A y^{\star} $ is the value of the general-sum game.

Now, we consider bounding the risk. By the definition of the following mixed strategy $\pi(\rho)$ used by Player 1, for any payoff matrix $A$ with $\mu_{\Theta}(A)\leq \mu$, $\nu_{\Theta}(A)\geq \nu$, and $\|A\|_{\max}\leq \alpha$,
\begin{align}
\nonumber
&\Delta_{\mathsf{NFG}}(\varepsilon;\pi)=\max_{d\left(\rho,y^{\star}\right)\leq \varepsilon} \left(\max_{x\in\mathsf{P}_a}x^{\top}A y^{\star} - \pi(\rho)^{\top}A y^{\star}\right)\\
\label{eq:risk_1}
\leq  &\max_{d\left(\rho,y^{\star}\right)\leq \varepsilon}  \left( \max_{x\in\mathsf{P}_a}x^{\top} A \mathbb{E}_{\rho}[y] - \lambda \widetilde{x}^{\top} A y^{\star}\right) - (1-\lambda)\min_{y^{\star}\in \Theta}\overline{x}^{\top} 
A y^{\star}.
\end{align}
In~\eqref{eq:risk_1}, we replace $\max_{x\in\mathsf{P}_a}x^{\top} A y^{\star}$ by $\max_{x\in\mathsf{P}_a}x^{\top} A \mathbb{E}_{\rho}[y]$ since $y^{\star}\in\Theta$ and there exists a feasible $\rho$ with $\mathbb{E}_{\rho}[y]=y^{\star}$ that always satisfies the constraint $d\left(\rho,y^{\star}\right)\leq \varepsilon$.

Since $\widetilde{x}$ is a best response strategy given $\rho$, $\max_{x\in\mathsf{P}_a}x^{\top} A \mathbb{E}_{\rho}[y] =  {\widetilde{x}}^{\top} A \mathbb{E}_{\rho}[y]$.
Decomposing $\max_{x\in\mathsf{P}_a}x^{\top} A y^{\star}=(1-\lambda)\max_{x\in\mathsf{P}_a}x^{\top} A y^{\star} + \lambda\max_{x\in\mathsf{P}_a}x^{\top} A y^{\star}$ and maximizing the two terms $(1-\lambda)\max_{x\in\mathsf{P}_a}x^{\top} A y^{\star}$ and $\lambda\max_{x\in\mathsf{P}_a}x^{\top} A y^{\star}$ over $d\left(\rho,y^{\star}\right)\leq \varepsilon$ respectively, $\Delta(\varepsilon;\alpha,\pi)$ can be further bounded from above by
\begin{align}
\label{eq:risk_2}
& \max_{A:\|A\|_{\max}\leq \alpha}\left((1-\lambda)\max_{x\in\mathsf{P}_a,y^{\star}\in \Theta} x^{\top} A y^{\star} +\lambda \max_{d\left(\rho,y^{\star}\right)\leq \varepsilon} \left(\widetilde{x}^{\top}A(\mathbb{E}_{\rho}[y]-y^{\star})\right) - (1-\lambda)\nu\right).
\end{align}
Since $d\left(\rho,y^{\star}\right)\coloneq \|\mathbb{E}_{\rho}[y]-y^{\star}\|_1\leq\varepsilon$,~\eqref{eq:risk_2} yields
\begin{align}
\label{eq:risk_3}
\Delta_{\mathsf{NFG}}(\varepsilon;\pi) \leq &
(1-\lambda) \left(\mu-\nu\right) + \lambda\mu\varepsilon.
\end{align}

Maximizing over $\varepsilon\leq \eta(\Theta)$ for~\eqref{eq:risk_3}, we conclude the theorem. 

\section{Proof of Theorem~\ref{thm:normal_form_lower}}
\label{app:proof_normal_form_lower}

We first suppose a mixed strategy $\pi:\mathsf{P}_{\Theta}\rightarrow\mathsf{P}_{a}$ for Player 1 misses at most $(1-\lambda)\left(\mu-\nu\right)$ opportunity, given a belief of types $\rho\in\mathsf{P}_{\Theta}$. 

Let $y'$ and $y''$ be two mixed strategies in $\Theta$ that achieve $\kappa(\Theta)\geq 0$ (select arbitrary strategies if there are multiple to break the tie), i.e.,
\begin{align*}
   \kappa(\Theta) = \sum_{i:y_i'\leq y_i''}y_i''-\sum_{i:y_i'>y_i''}y_i'' \text{ subject to } \sum_{i:y_i'\leq y_i''}y_i'< \sum_{i:y_i'>y_i''}y_i'.
\end{align*}
Let $\mathcal{I}_{+}\coloneq \{i\in [b] : y_i'>y_i'' \}$ and $\mathcal{I}_{-}\coloneq \{i\in [b] : y_i'\leq y_i'' \}$ be two indices of actions with non-negative and positive coordinates of $z$ where $[b]\coloneq \{1,\ldots,b\}$. Consider a payoff matrix $A_{\mu}$ with the following form. Suppose without loss of generality that $a$ is even; otherwise, we append an all-zero row to $A_{\mu}$. Let $\beta\coloneq \mu -\nu$. For the $i$-th column of $A_{\mu}$, we set it as $(\beta,-\beta,\ldots, -\beta)^{\top}$ if $i\in \mathcal{I}_{+}$ and $(-\beta,\beta,\ldots, \beta)^{\top}$ if $i\in \mathcal{I}_{-}$. 
\begin{align*}
  A=  A_{\mu}=\begin{bmatrix}
    \cdots  &  \beta & -\beta & \cdots  \\
    \cdots  & -\beta & \beta & \cdots  \\
     \vdots &  \vdots & \vdots & \vdots \\
    \cdots  &  \underbrace{-\beta}_{i\in \mathcal{I}_{+}} & \underbrace{\beta}_{i+1\in \mathcal{I}_{-}} & \cdots 
    \end{bmatrix} +  A_{\nu},
\end{align*}
where $(A_{\nu})_{ij}=\nu$ for all $i\in [a]$ and $j\in [b]$. Clearly, with this $A$, $\mu_{\Theta}(A)\leq \mu$ and $\nu_{\Theta}(A)\geq \nu$.

Then, by definition, setting $\varepsilon=0$ with $\mathbb{E}_{\rho}[y]=y^{\star}$ (the type belief contains no error), the payoff gap satisfies
\begin{align}
\label{eq:consistency_assumption}
\Delta_{\mathsf{NFG}}(0;\pi) =\max_{y^{\star}\in \Theta} \left(\max_{x\in \mathsf{P}_a}x^{\top} A y^{\star} - \pi(\rho) A y^{\star}\right) \leq (1-\lambda)(\mu-\nu).
\end{align}
% (Compared to:
% \begin{align*}
% \max_{(q,\widetilde{q}):d\left(q,\widetilde{q}\right)\leq 0} 2(2q-1)\left(p^*\left(q\right)-\pi\left(\widetilde{q}\right)\right) = \max_{q} 2(2q-1)\left(p^*\left(q\right)-\pi\left(q\right)\right) \leq 1-\lambda
% \end{align*})

Now, plugging in $y'$ into~\eqref{eq:consistency_assumption} above.
This implies 
\begin{align}
\label{eq:consistency_bound_1}
\max_{y^{\star}\in \Theta} \left(\max_{x\in \mathsf{P}_a}x^{\top} A y^{\star} - \pi(\rho)^{\top} A y^{\star}\right) \geq \max_{x\in \mathsf{P}_a}x^{\top} A y' - \pi(\rho)^{\top} A y' = \mu - \pi(\rho)^{\top} A y'.
\end{align}
For notational convenience, denote $\digamma(\Theta)\coloneq \sum_{i:y_i'>y_i''}y_i'-\sum_{i:y_i'\leq y_i''}y_i'$.
Combining~\eqref{eq:consistency_assumption} and~\eqref{eq:consistency_bound_1},
$
   \pi(\rho)^{\top} A y'=  \pi(\rho)^{\top} (A_{\mu}+A_{\nu}) y' \geq (1-\lambda) \nu +\lambda \mu.
$ Thus, $\pi(\rho)^{\top} A_{\mu} y'\geq \lambda (\mu-\nu)$. Equivalently,
$
    \digamma(\Theta)\left(\mu-\nu\right) \pi(\rho)^{\top} f  \geq \lambda \left(\mu-\nu\right),
$ where $f\coloneq (1,-1,1,\ldots,1,-1)^{\top}$.
By the construction of $A$, we get 
\begin{align*}
\pi(\rho)^{\top} A_{\mu} y''= -\left(\pi(\rho)^{\top} f \right)\left(\mu-\nu\right)\cdot\left(\sum_{i:y_i'\leq y_i''}y_i''-\sum_{i:y_i'>y_i''}y_i''\right) \leq - \lambda \left(\mu-\nu\right)\cdot\frac{\kappa(\Theta)}{\digamma(\Theta)}.
\end{align*}
Therefore, since $0<\digamma(\Theta)\leq 1$,
\begin{align*}
\pi(\rho)^{\top} (A_{\mu}+A_{\nu}) y'' = \nu + \pi(\rho)^{\top}A_{\mu} y'' \leq  \nu- \lambda \left(\mu-\nu\right)\kappa(\Theta) = -\lambda\mu\kappa(\Theta)+ (1+\lambda\kappa(\Theta))\nu.
\end{align*}
Hence, we get 
\begin{align*}
    \max_{y^{\star}\in \Theta} \left(\max_{x\in \mathsf{P}_a}x^{\top} A y^{\star} - \pi(\rho)^{\top} A y^{\star}\right) & \geq      \max_{x\in \mathsf{P}_a}x^{\top} A y'' - \pi(\rho)^{\top} A y'' \\
    & \geq \kappa(\Theta)\left(1+ \lambda\right)\mu- (1+\lambda\kappa(\Theta))\nu\\
    & \geq  \left(\kappa(\Theta)\mu-\nu\right)\left(1+\lambda\right).
    % & \geq (1+\lambda\kappa(\Theta))\left(\mu-\nu\right).
\end{align*}
% Then, plugging in this $y'$ and another $y''$ (to be constructed. Idea: consider a matrix $A$ with two columns $A_i$ and $A_j$ such that $A_i = -A_j$), we get
% (Compared to: \begin{align*}
% \max_{(q,\widetilde{q}):d\left(q,\widetilde{q}\right)\leq 1} 2(2q-1)\left(p^*\left(q\right)-\pi\left(\widetilde{q}\right)\right)\geq  2(1-\pi\left(0\right)) \geq 1+\lambda.
% \end{align*}
% \begin{align*}
% & \max_{(y,\widetilde{y}):\|y-\widetilde{y}\|\leq \varepsilon} \left(\max_{x\in\mathsf{P}_a}x^{\top}A y - \pi(\widetilde{y})^{\top}A y\right) \\
% \geq &\left(\max_{x\in\mathsf{P}_a}x^{\top}A y'' - \pi(y')^{\top}A y''\right)\geq (1+\lambda)\|A\|_{\max}-(1-\lambda)V
% \end{align*}

\section{Proof of Corollary~\ref{coro:nfg}}
\label{app:proof_normal_form_coro}

    If $\Theta=\mathsf{P}_b$, then $\kappa(\Theta)=1$. Furthermore, by definition, we get
    \begin{align}
\nonumber
    \mu_A\left(\mathsf{P}_b\right) \coloneq \max_{y\in\mathsf{P}_b}\max_{x\in\mathsf{P}_a}\left|x^{\top}A y\right|=\|A\|_{\max}, \quad 
    \nu_A\left(\mathsf{P}_b\right) \coloneq \min_{y\in\mathsf{P}_b}\max_{x\in\mathsf{P}_a}x^{\top}A y=0,
\end{align}
where the equality on the RHS holds since the game is fair. Applying Theorem~\ref{thm:normal_form_upper} and~\ref{thm:normal_form_lower}, we obtain the corollary.

\section{Proof of Theorem~\ref{thm:sbg_upper}}
\label{app:proof_sbg_upper}

Given type beliefs, we denote strategies $\widetilde{\sigma}\coloneq \sigma(\theta)$, $\sigma\coloneq \sigma\left(\theta^{\star}\right)$, and let $\overline{\sigma}=\sigma(\overline{\theta})$ be the safe strategies of the opponents, with $\overline{\theta}$ being an optimal solution of the minimax optimization in~\eqref{eq:sbg_minimiax}.
Denoting $\widetilde{V}\coloneq V^{\star,\widetilde{\sigma}}$, $\overline{V}\coloneq V^{\star,\overline{\sigma}}$, $V^{\star}\coloneq V^{\star,\sigma}$ the value functions with the opponents' strategies being $\widetilde{\sigma}$, $\overline{\sigma}$, and ${\sigma}$, respectively.
Given the strategy $\pi$ defined in~\eqref{eq:sbg_policy}, we first state two useful bounds.

% \begin{align}
% \label{eq:sbg_policy}    \pi\left(\widetilde{\sigma};s\right) \in \argmax_{a\in\Delta(\mathcal{A})} a^{\top} \left(\lambda R_{\widetilde{V}}(s)\widetilde{\sigma}(s)+(1-\lambda)R_{\overline{V}}(s)\overline{\sigma}(s)\right).
% \end{align}
% where for a given value function $V$, $R_{V}$ is a matrix defined as
% \begin{align*}
%     R_{V}((a,a');s) \coloneq r(s,(a,a')) + \gamma\sum_{s'\in\mathcal{S}}p(s'|s,(a,a'))V(s).
% \end{align*}

\begin{lemma}
\label{lemma:reward bound}
Given the strategy defined in~\eqref{eq:sbg_policy}, the following bounds on the reward function hold:
    \begin{align}
\nonumber
\textsc{Bound} \ \mathsf{1}: \quad \mathbb{E}_{\pi^{\star},\sigma}\left[r\right]-& \mathbb{E}_{\pi,\sigma}\left[r\right]\leq  \gamma\lambda\left(\mathbb{E}_{p,\pi,\widetilde{\sigma}}\big[V^{\star}(s)\big]-\mathbb{E}_{p,\pi^{\star},\widetilde{\sigma}}\big[V^{\star}(s)\big]\right) + 2\lambda\left(\varepsilon r_{\max}+\gamma\eta\right) \\
\nonumber
+ & \gamma(1-\lambda) \left(\mathbb{E}_{p,\pi, \overline{\sigma}}\left[\overline{V}(s)\right]-\mathbb{E}_{p,\pi^{\star},\overline{\sigma}}\left[\overline{V}(s)\right]\right)\\
     \label{eq:reward_bound_star}
+ & (1-\lambda) \left(\mathbb{E}_{\pi^{\star},\sigma}\left[r\right]-\mathbb{E}_{\pi^{\star},\overline{\sigma}}\left[r\right]\right) + (1-\lambda)\left(\mathbb{E}_{\pi,\overline{\sigma}}\left[r\right]-\mathbb{E}_{\pi,\sigma}\left[r\right]\right),
    \end{align}
where $\eta\coloneq \max_{s\in\mathcal{S}} \left|V^{\star}(s)-\widetilde{V}(s)\right|$,     and 
   \begin{align}
\nonumber \textsc{Bound} \ \mathsf{2}: \ 
\mathbb{E}_{\overline{\pi},\overline{\sigma}}\left[r\right]- \mathbb{E}_{\pi,\overline{\sigma}}&\left[r\right]   \leq \gamma  \left(\mathbb{E}_{p,\pi, \overline{\sigma}}\big[\overline{V}(s)\big]-\mathbb{E}_{p,\overline{\pi},\overline{\sigma}}\left[\overline{V}(s)\right]\right)\\
     \label{eq:reward_bound_overline}
+ \frac{\lambda}{1-\lambda}&\left(\left(\mathbb{E}_{\pi,\widetilde{\sigma}} \left[r\right]-\mathbb{E}_{\overline{\pi},\widetilde{\sigma}}\left[r\right]\right)+\gamma\left(\mathbb{E}_{p,\pi, \widetilde{\sigma}}\big[\widetilde{V}(s)\big]-\mathbb{E}_{p,\overline{\pi},\widetilde{\sigma}}\big[\widetilde{V}(s)\big]\right)\right).
    \end{align}
\end{lemma}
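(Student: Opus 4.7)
The plan is to derive both bounds as consequences of a single central inequality coming from the defining optimality of the value-based policy in~\eqref{eq:sbg_policy}. Since $\pi(\theta;s)$ is chosen to maximize $a^\top\bigl(\lambda R_{\widetilde{V}}(s)\widetilde{\sigma}(s)+(1-\lambda)R_{\overline{V}}(s)\overline{\sigma}(s)\bigr)$, for every alternative action distribution $\pi'(s)$ I obtain at each state
\begin{align*}
\lambda\,\mathbb{E}_{\pi,\widetilde{\sigma}}\!\bigl[r+\gamma\mathbb{P}\widetilde{V}\bigr]+(1-\lambda)\,\mathbb{E}_{\pi,\overline{\sigma}}\!\bigl[r+\gamma\mathbb{P}\overline{V}\bigr]\ \geq\ \lambda\,\mathbb{E}_{\pi',\widetilde{\sigma}}\!\bigl[r+\gamma\mathbb{P}\widetilde{V}\bigr]+(1-\lambda)\,\mathbb{E}_{\pi',\overline{\sigma}}\!\bigl[r+\gamma\mathbb{P}\overline{V}\bigr].
\end{align*}
Bound~\textsf{1} follows from instantiating $\pi'=\pi^{\star}$ and Bound~\textsf{2} from $\pi'=\overline{\pi}$; all remaining work is rearrangement plus a small number of opponent-strategy and value-function swaps.

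For Bound~\textsf{1}, after rearranging the inequality I would obtain
\begin{align*}
\lambda\bigl(\mathbb{E}_{\pi^{\star},\widetilde{\sigma}}[r]-\mathbb{E}_{\pi,\widetilde{\sigma}}[r]\bigr)+(1-\lambda)\bigl(\mathbb{E}_{\pi^{\star},\overline{\sigma}}[r]-\mathbb{E}_{\pi,\overline{\sigma}}[r]\bigr)\le\gamma\lambda\bigl(\mathbb{E}_{p,\pi,\widetilde{\sigma}}[\widetilde{V}]-\mathbb{E}_{p,\pi^{\star},\widetilde{\sigma}}[\widetilde{V}]\bigr)+\gamma(1-\lambda)\bigl(\mathbb{E}_{p,\pi,\overline{\sigma}}[\overline{V}]-\mathbb{E}_{p,\pi^{\star},\overline{\sigma}}[\overline{V}]\bigr).
\end{align*}
To reach the stated target $\mathbb{E}_{\pi^{\star},\sigma}[r]-\mathbb{E}_{\pi,\sigma}[r]$ I insert telescoping opponent strategies: split this difference as $\lambda(\mathbb{E}_{\pi^{\star},\widetilde{\sigma}}[r]-\mathbb{E}_{\pi,\widetilde{\sigma}}[r])+(1-\lambda)(\mathbb{E}_{\pi^{\star},\overline{\sigma}}[r]-\mathbb{E}_{\pi,\overline{\sigma}}[r])$ plus cross terms of the form $\lambda(\mathbb{E}_{\pi^{\star},\sigma}[r]-\mathbb{E}_{\pi^{\star},\widetilde{\sigma}}[r])$ and $\lambda(\mathbb{E}_{\pi,\widetilde{\sigma}}[r]-\mathbb{E}_{\pi,\sigma}[r])$, each of which is bounded by $\lambda\varepsilon r_{\max}$ using Assumption~\ref{assumption:reward_bound} together with $\|\widetilde{\sigma}(s)-\sigma(s)\|_1\le\varepsilon$, contributing the $2\lambda\varepsilon r_{\max}$ term. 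Finally, swapping $\widetilde{V}$ for $V^{\star}$ on the right uses $\|V^{\star}-\widetilde{V}\|_{\infty}\le\eta$ to introduce the additional $2\gamma\lambda\eta$ slack, leaving the remaining two $(1-\lambda)$ opponent-swap terms as the stated residuals. For Bound~\textsf{2}, instantiating the central inequality with $\pi'=\overline{\pi}$ and isolating $(1-\lambda)(\mathbb{E}_{\overline{\pi},\overline{\sigma}}[r]-\mathbb{E}_{\pi,\overline{\sigma}}[r])$ on one side then dividing by $1-\lambda$ yields the bound directly; no reward-perturbation or value-function swap is needed because the right-hand side already uses $\widetilde{V}$ and $\overline{V}$ natively.

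The main obstacle is the bookkeeping in Bound~\textsf{1}: one must carefully track which triples $(\text{policy},\text{opponent},\text{value function})$ appear on each side, and account for four separate perturbations (two opponent swaps $\sigma\leftrightarrow\widetilde{\sigma}$ and two value-function swaps $\widetilde{V}\leftrightarrow V^{\star}$) without double-counting. The $(1-\lambda)$ residual terms in~\eqref{eq:reward_bound_star} are deliberately left unreduced, signaling that they will be handled downstream (via Bound~\textsf{2}) rather than within this lemma itself, which is the key structural observation making the decomposition self-consistent.
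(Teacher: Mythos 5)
Your proposal is correct and follows essentially the same route as the paper: the single optimality inequality from the definition of $\pi$ in~\eqref{eq:sbg_policy}, instantiated at $\pi'=\pi^{\star}$ and $\pi'=\overline{\pi}$, is exactly the paper's starting point for Bounds $\mathsf{1}$ and $\mathsf{2}$, and your accounting of the two opponent swaps (each costing $\lambda\varepsilon r_{\max}$), the $\widetilde{V}\leftrightarrow V^{\star}$ swaps (costing $2\gamma\lambda\eta$), and the deliberately unreduced $(1-\lambda)$ residuals matches the paper's derivation step for step. The observation that Bound $\mathsf{2}$ needs no perturbation terms because it natively works with $\widetilde{\sigma},\overline{\sigma},\widetilde{V},\overline{V}$ is also consistent with the paper.
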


\begin{proof}[Proof of Lemma~\ref{lemma:reward bound}]

First, we prove $\textsc{Bound} \ \mathsf{1}$ in~\eqref{eq:reward_bound_star}.

Based on the definition of the policy $\pi$ in~\eqref{eq:sbg_policy}, it follows that for any fixed state $s\in\mathcal{S}$,
\begin{align}
\nonumber
& \lambda\mathbb{E}_{\pi^{\star},\widetilde{\sigma}}\left[r\right]+ (1-\lambda)\mathbb{E}_{\pi^{\star},\overline{\sigma}}\left[r\right]+ \gamma\lambda\mathbb{E}_{p,\pi^{\star},\widetilde{\sigma}}\big[\widetilde{V}(s)\big] + \gamma(1-\lambda)\mathbb{E}_{p,\pi^{\star},\overline{\sigma}}\big[\overline{V}(s)\big]\\
\label{eq:reward_bound_0}
    \leq & \lambda\mathbb{E}_{\pi,\widetilde{\sigma}}\left[r\right] +  (1-\lambda)\mathbb{E}_{\pi,\overline{\sigma}}\left[r\right] + \gamma\lambda\mathbb{E}_{p,\pi,\widetilde{\sigma}}\big[\widetilde{V}(s)\big] + \gamma(1-\lambda)\mathbb{E}_{p,\pi, \overline{\sigma}}\big[\overline{V}(s)\big].
\end{align}
Furthermore, considering the difference between the expected rewards corresponding to implementing $\pi^{\star}$ and the strategy $\pi$ defined in~\eqref{eq:sbg_policy}, since by Assumption~\ref{assumption:reward_bound}, $|r|\leq r_{\max}$ for all $t\in [\nt]$ and any strategy $\pi\in \Pi$,
\begin{align}
\label{eq:reward_bound_1}
\left|\mathbb{E}_{\pi,\sigma}\left[r\right] - \mathbb{E}_{\pi,\widetilde{\sigma}}\left[r\right]\right| \leq 
\max_{s\in\mathcal{S}}\left\|\sigma(s)-\widetilde{\sigma}(s)\right\|_1 r_{\max} \leq \varepsilon  r_{\max}. 
\end{align}
Therefore, using~\eqref{eq:reward_bound_1},
\begin{align}
\nonumber
\mathbb{E}_{\pi^{\star},\sigma}\left[r\right]-\mathbb{E}_{\pi,\sigma}\left[r\right]
= & \lambda\left(\mathbb{E}_{\pi^{\star},\sigma}\left[r\right]-\mathbb{E}_{\pi,\sigma}\left[r\right]\right) + (1-\lambda)\left(\mathbb{E}_{\pi^{\star},\sigma}\left[r\right]-\mathbb{E}_{\pi,\sigma}\left[r\right]\right) \\
\nonumber
\leq & \lambda \left(\mathbb{E}_{\pi^{\star},\widetilde{\sigma}}\left[r\right]-\mathbb{E}_{\pi,\widetilde{\sigma}}\left[r\right]\right) + (1-\lambda)\left(\mathbb{E}_{\pi^{\star},\overline{\sigma}}\left[r\right]-\mathbb{E}_{\pi,\overline{\sigma}}\left[r\right]\right) + 2\lambda \varepsilon r_{\max}\\
\nonumber
& \quad + (1-\lambda)\left(\mathbb{E}_{\pi^{\star},\sigma}\left[r\right]-\mathbb{E}_{\pi^{\star},\overline{\sigma}}\left[r\right]\right)\\
\nonumber
& \quad + (1-\lambda)\left(\mathbb{E}_{\pi,\overline{\sigma}}\left[r\right]-\mathbb{E}_{\pi,\sigma}\left[r\right]\right) \\
\nonumber
= &\left(\lambda\mathbb{E}_{\pi^{\star},\widetilde{\sigma}}\left[r\right]+ (1-\lambda)\mathbb{E}_{\pi^{\star},\overline{\sigma}}\left[r\right]\right) - \left(\lambda\mathbb{E}_{\pi,\widetilde{\sigma}}\left[r\right] +  (1-\lambda)\mathbb{E}_{\pi,\overline{\sigma}}\left[r\right] \right) \\
\nonumber
& \quad + (1-\lambda)\left(\mathbb{E}_{\pi^{\star},\sigma}\left[r\right]-\mathbb{E}_{\pi^{\star},\overline{\sigma}}\left[r\right]\right)\\
\label{eq:reward_bound_3}
& \quad + (1-\lambda)\left(\mathbb{E}_{\pi,\overline{\sigma}}\left[r\right]-\mathbb{E}_{\pi,\sigma}\left[r\right]\right) + 2\lambda\varepsilon r_{\max}.  
\end{align}

Combining~\eqref{eq:reward_bound_0} and~\eqref{eq:reward_bound_3}, we obtain
\begin{align}
\nonumber
& \mathbb{E}_{\pi^{\star},\sigma}\left[r\right]-\mathbb{E}_{\pi,\sigma}\left[r\right]-2\lambda \varepsilon r_{\max}\\
\nonumber
\leq & \gamma\left(\lambda\big(\mathbb{E}_{p,\pi,\widetilde{\sigma}}\big[\widetilde{V}(s)\big]-\mathbb{E}_{p,\pi^{\star},\widetilde{\sigma}}\big[\widetilde{V}(s)\big]\big)+(1-\lambda)\big(\mathbb{E}_{p,\pi, \overline{\sigma}}\big[\overline{V}(s)\big]-\mathbb{E}_{p,\pi^{\star},\overline{\sigma}}\big[\overline{V}(s)\big]\big)\right)\\
\label{eq:value_difference_1}
& \quad + (1-\lambda)\left(\mathbb{E}_{\pi^{\star},\sigma}\left[r\right]-\mathbb{E}_{\pi^{\star},\overline{\sigma}}\left[r\right]\right) + (1-\lambda)\left(\mathbb{E}_{\pi,\overline{\sigma}}\left[r\right]-\mathbb{E}_{\pi,\sigma}\left[r\right]\right).
 \end{align}
By definition,
\begin{align*}
\left|\mathbb{E}_{p,\pi,\widetilde{\sigma}}\big[V^{\star}(s)\big]-\mathbb{E}_{p,\pi,\widetilde{\sigma}}\big[\widetilde{V}(s)\big]\right|\leq \sum_{s'\in\mathcal{S}}\sum_{a,a'\in\mathcal{A}}   p(s'|s,(a,a'))\pi_t(s,a)\widetilde{\sigma}_t(s,a')\eta\leq \eta,
\end{align*}
and similarly, $\left|\mathbb{E}_{p,\pi^{\star},\widetilde{\sigma}}\big[V^{\star}(s)\big]-\mathbb{E}_{p,\pi^{\star},\widetilde{\sigma}}\big[\widetilde{V}(s)\big]\right|\leq \eta$,
thus,~\eqref{eq:value_difference_1} yields 
\begin{align*}
\mathbb{E}_{\pi^{\star},\sigma}\left[r\right]-\mathbb{E}_{\pi,\sigma}\left[r\right]\leq  & \gamma\lambda\left(\mathbb{E}_{p,\pi,\widetilde{\sigma}}\big[V^{\star}(s)\big]-\mathbb{E}_{p,\pi^{\star},\widetilde{\sigma}}\big[V^{\star}(s)\big]\right) + 2\lambda\left(\varepsilon r_{\max}+\gamma\eta\right) \\
 & + \gamma(1-\lambda)\left(\mathbb{E}_{p,\pi, \overline{\sigma}}\left[\overline{V}(s)\right]-\mathbb{E}_{p,\pi^{\star},\overline{\sigma}}\left[\overline{V}(s)\right]\right)\\
& + (1-\lambda)\left(\mathbb{E}_{\pi^{\star},\sigma}\left[r\right]-\mathbb{E}_{\pi^{\star},\overline{\sigma}}\left[r\right]\right) + (1-\lambda)\left(\mathbb{E}_{\pi,\overline{\sigma}}\left[r\right]-\mathbb{E}_{\pi,\sigma}\left[r\right]\right).
 \end{align*}

Next, we show $\textsc{Bound} \ \mathsf{2}$ in~\eqref{eq:reward_bound_overline}.

By the definition of the policy $\pi$ in~\eqref{eq:sbg_policy}, it follows that for any fixed state $s\in\mathcal{S}$, similarly we obtain
\begin{align}
\nonumber
& \lambda\mathbb{E}_{\overline{\pi},\widetilde{\sigma}}\left[r\right]+ (1-\lambda)\mathbb{E}_{\overline{\pi},\overline{\sigma}}\left[r\right]+ \gamma\lambda\mathbb{E}_{p,\overline{\pi},\widetilde{\sigma}}\big[\widetilde{V}(s)\big] + \gamma(1-\lambda)\mathbb{E}_{p,\overline{\pi},\overline{\sigma}}\big[\overline{V}(s)\big]\\
\label{eq:reward_bound_2}
    \leq & \lambda\mathbb{E}_{\pi,\widetilde{\sigma}}\left[r\right] +  (1-\lambda)\mathbb{E}_{\pi,\overline{\sigma}}\left[r\right] + \gamma\lambda\mathbb{E}_{p,\pi,\widetilde{\sigma}}\big[\widetilde{V}(s)\big] + \gamma(1-\lambda)\mathbb{E}_{p,\pi, \overline{\sigma}}\big[\overline{V}(s)\big].
\end{align}

Rearranging the terms in~\eqref{eq:reward_bound_2},
\begin{align}
    \nonumber
\mathbb{E}_{\overline{\pi},\overline{\sigma}}\left[r\right]- \mathbb{E}_{\pi,\overline{\sigma}}&\left[r\right]   \leq \gamma  \left(\mathbb{E}_{p,\pi, \overline{\sigma}}\left[\overline{V}(s)\right]-\mathbb{E}_{p,\overline{\pi},\overline{\sigma}}\left[\overline{V}(s)\right]\right)\\
\nonumber
+ \frac{\lambda}{1-\lambda}&\left(\left(\mathbb{E}_{\pi,\widetilde{\sigma}} \left[r\right]-\mathbb{E}_{\overline{\pi},\widetilde{\sigma}}\left[r\right]\right)+\gamma\left(\mathbb{E}_{p,\pi, \widetilde{\sigma}}\big[\widetilde{V}(s)\big]-\mathbb{E}_{p,\overline{\pi},\widetilde{\sigma}}\big[\widetilde{V}(s)\big]\right)\right) .
\end{align}
 \end{proof}

\begin{lemma}
\label{lemma:strategy_bound}
The payoff gap for the policy $\pi$ defined in~\eqref{eq:sbg_policy} satisfies
\begin{align*}
\Delta_{\mathsf{SBG}}(\varepsilon;\pi) \leq \frac{r_{\max}}{1-\lambda\gamma}\Bigg(\frac{2\lambda\varepsilon}{1-\gamma} & +\min\left\{\frac{2(1-\lambda)}{1-\gamma},\frac{2\lambda}{(1-\gamma)^2}\right\}\\
& +\frac{(1-\lambda)(2-\gamma)}{1-\gamma}\Bigg) +\frac{\gamma\left(2\lambda\eta-\nu\right)}{1-\lambda\gamma}.
\end{align*}
\end{lemma}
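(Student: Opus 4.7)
The plan is to derive a recursive inequality for $\max_{s}\bigl(V^{\pi^{\star},\sigma}(s)-V^{\pi,\sigma}(s)\bigr)$ and sum the resulting geometric series with ratio $\lambda\gamma$, which produces the prefactor $\tfrac{1}{1-\lambda\gamma}$ appearing throughout the statement. First I would fix an adversarial belief $\theta$ with $d(\theta,\theta^{\star})\le\varepsilon$ and an initial state $s_{0}$, and rewrite the value gap through the Bellman equations as
$$V^{\pi^{\star},\sigma}(s_{0})-V^{\pi,\sigma}(s_{0})=\bigl(\mathbb{E}_{\pi^{\star},\sigma}[r]-\mathbb{E}_{\pi,\sigma}[r]\bigr)+\gamma\bigl(\mathbb{E}_{p,\pi^{\star},\sigma}[V^{\star}]-\mathbb{E}_{p,\pi,\sigma}[V^{\star}]\bigr)+\gamma\,\mathbb{E}_{p,\pi,\sigma}\bigl[V^{\star}-V^{\pi,\sigma}\bigr],$$
where the last term is a contraction in the quantity we are bounding. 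Substituting \textsc{Bound 1} from Lemma~\ref{lemma:reward bound} into the one-step reward difference, the $\gamma\lambda$ piece involving $V^{\star}$ under $(\pi,\widetilde{\sigma})$ can be reinterpreted as an expectation under $(\pi,\sigma)$ at the cost of $O(\varepsilon r_{\max}+\gamma\eta)$ error, using $\|\widetilde{\sigma}-\sigma\|_{1}\le\varepsilon$ and $\|V^{\star}-\widetilde{V}\|_{\infty}\le\eta$. This reproduces a copy of the value gap with effective contraction factor $\lambda\gamma$, so unrolling yields $\sum_{k\ge 0}(\lambda\gamma)^{k}=\tfrac{1}{1-\lambda\gamma}$.

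Accumulating the inhomogeneous terms through this geometric sum, the $2\lambda(\varepsilon r_{\max}+\gamma\eta)$ contribution of \textsc{Bound 1} produces the $\tfrac{2\lambda\varepsilon r_{\max}}{1-\gamma}$ and $\tfrac{2\lambda\gamma\eta}{1-\lambda\gamma}$ pieces of the stated inequality, while the two $(1-\lambda)$ reward differences $\mathbb{E}_{\pi^{\star},\sigma}[r]-\mathbb{E}_{\pi^{\star},\overline{\sigma}}[r]$ and $\mathbb{E}_{\pi,\overline{\sigma}}[r]-\mathbb{E}_{\pi,\sigma}[r]$ are each controlled by $2r_{\max}$ and combine to give the $\tfrac{(1-\lambda)(2-\gamma)}{1-\gamma}$ factor, after carefully tracking which discount factor each copy carries. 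The $-\tfrac{\gamma\nu}{1-\lambda\gamma}$ contribution enters via the minimax property $V^{\overline{\pi},\overline{\sigma}}(s)\ge\nu$ coming from the definition of $\nu_{\Theta}(\mathcal{M})$ in~\eqref{eq:sbg_minimiax}: wherever $\overline{V}$ appears with the correct sign in the accumulated bound, replacing it by $\nu$ contributes a negative term $-\gamma\nu$ that then propagates through the $\lambda\gamma$ geometric sum.

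The minimum inside the bound arises from two distinct ways of treating the $(1-\lambda)\gamma\bigl(\mathbb{E}_{p,\pi,\overline{\sigma}}[\overline{V}]-\mathbb{E}_{p,\pi^{\star},\overline{\sigma}}[\overline{V}]\bigr)$ term in \textsc{Bound 1}. The simpler route uses $\|\overline{V}\|_{\infty}\le r_{\max}/(1-\gamma)$ directly, bounding this difference by $\tfrac{2r_{\max}}{1-\gamma}$ and yielding the alternative $\tfrac{2(1-\lambda)r_{\max}}{1-\gamma}$. The alternative route invokes \textsc{Bound 2} of Lemma~\ref{lemma:reward bound}, which re-expresses the same value-function difference as reward differences under $\overline{\pi}$ and $\widetilde{\sigma}$; iterating this expansion once more incurs an additional $\tfrac{1}{1-\gamma}$ factor and converts the $(1-\lambda)$ prefactor into $\lambda$ via the $\tfrac{\lambda}{1-\lambda}$ coefficient in \textsc{Bound 2}, yielding the alternative $\tfrac{2\lambda r_{\max}}{(1-\gamma)^{2}}$. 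Taking the smaller of the two then produces the $\min\{\cdot,\cdot\}$ inside the stated bound.

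The hardest part will be that the recursion is not homogeneous: Lemma~\ref{lemma:reward bound} introduces the strategy pairs $(\pi,\widetilde{\sigma})$, $(\pi,\overline{\sigma})$, $(\pi^{\star},\overline{\sigma})$, and $(\overline{\pi},\overline{\sigma})$ simultaneously, each carrying its own value function, and one must bound the resulting cross terms uniformly in $s$ so that a single contraction factor $\lambda\gamma$ governs the overall iteration rather than fragmenting into multiple coupled recursions. Running the two alternative derivations in parallel for the $\min$, and ensuring every $\gamma$, $\tfrac{1}{1-\gamma}$, and $\tfrac{1}{1-\lambda\gamma}$ factor is attributed to the correct $\lambda$ or $(1-\lambda)$ coefficient, is the most delicate bookkeeping step and the primary source of potential constant-factor errors.
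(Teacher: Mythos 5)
Your proposal follows essentially the same route as the paper's proof: apply \textsc{Bound 1} to the one-step reward difference, absorb the $(\pi,\widetilde{\sigma})$-versus-$(\pi,\sigma)$ mismatch into $O(\varepsilon r_{\max}+\gamma\eta)$ error terms, close the recursion with contraction factor $\lambda\gamma$ (the paper does this by taking the gap-maximizing state and rearranging, which is equivalent to your geometric-series unrolling), extract $-\gamma\nu$ from the minimax property of $\overline{V}$, and obtain the $\min$ by bounding the $(1-\lambda)$-weighted term either directly via $\|\overline{V}\|_{\infty}\le r_{\max}/(1-\gamma)$ or through \textsc{Bound 2}, exactly matching the paper's Part I and Part II. The sketch is a correct reconstruction of the paper's argument, up to bookkeeping details you already flag as delicate.
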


\begin{proof}
By definition of the payoff gap (see Definition~\ref{def:payoff_sbg}) and the value function in~\eqref{eq:value_function_pi}, we get
\begin{align*}
    \Delta_{\mathsf{SBG}}(\varepsilon;\pi)=\sup_{d\left(\theta,\theta^{\star}\right)\leq \varepsilon}\max_{s\in\mathcal{S}} \left(V^{\star,\sigma}(s)-V^{\pi(\theta),\sigma}(s)\right).
\end{align*}

% We extend the arguments in~\cite{singh1994upper}.
Our first goal is to derive an upper bound on $\Delta_{\mathsf{SBG}}(\varepsilon;\pi)$, aligning with the classic loss bound of approximate value functions~\cite{singh1994upper}. However, the policy considered in our context is not directly maximizing the approximate value function, but a mixed strategy defined by an optimization as in~\eqref{eq:sbg_policy}.
For notational convenience, we write for $s\in \mathcal{S}$,
\begin{align*}
\mathbb{E}_{p,\pi,\sigma}\left[V(s)\right]\coloneq
\sum_{a,a'\in \mathcal{A}}\sum_{s'\in\mathcal{S}} p(s'|s,(a,a'))\pi(s,a)\sigma(s,a')V(s').
\end{align*}

For notational simplicity, we denote $V^{\pi}(s)\coloneq V^{\pi,\sigma}(s)$. For any state $s\in\mathcal{S}$ and strategies $\left(\sigma,\widetilde{\sigma}\right)$ satisfying $d\left(\theta,\theta^{\star}\right)=\max_{s\in\mathcal{S}}\left\|\sigma(s;\theta)-\sigma(s;\theta^{\star})\right\|_1\leq \varepsilon$, applying $\textsc{Bound} \ \mathsf{1}$ in Lemma~\ref{lemma:reward bound}, the Bellman equations corresponding to $\pi^{\star}$ and $\pi$ imply
\begin{align}
\nonumber
V^{\star}(s)-V^{\pi}(s)
= & \mathbb{E}_{\pi^{\star},\sigma}[r]-\mathbb{E}_{\pi,\sigma}[r] +\gamma \left(\mathbb{E}_{p,\pi^{\star},\sigma}\big[V^{\star}(s)\big]-\mathbb{E}_{p,\pi,\sigma}\big[V^{\pi}(s)\big]\right) \\
\nonumber
\leq \lambda  \gamma 
\Big(\mathbb{E}_{p,\pi,\widetilde{\sigma}} & \big[V^{\star}(s)\big] -\mathbb{E}_{p,\pi^{\star},\widetilde{\sigma}}\big[V^{\star}(s)\big]+ \mathbb{E}_{p,\pi^{\star},\sigma}\big[V^{\star}(s)\big]-\mathbb{E}_{p,\pi,\sigma}\big[V^{\pi}(s)\big]\Big)
\\ 
\nonumber
+
\gamma(1-\lambda)  &\left(\mathbb{E}_{p,\pi, \overline{\sigma}}\left[\overline{V}(s)\right]-\mathbb{E}_{p,\pi,\sigma}\big[V^{\pi}(s)\big] + \mathbb{E}_{p,\pi^{\star},\sigma}\big[V^{\star}(s)\big]-\mathbb{E}_{p,\pi^{\star},\overline{\sigma}}\big[\overline{V}(s)\big]\right)\\
\nonumber
  + (1-\lambda)&\left(\mathbb{E}_{\pi,\overline{\sigma}}\left[r\right]-\mathbb{E}_{\pi,\sigma}\left[r\right]\right) + (1-\lambda)\left(\mathbb{E}_{\pi^{\star},\sigma}\left[r\right]-\mathbb{E}_{\pi^{\star},\overline{\sigma}}\left[r\right]\right)\\
\label{eq:value_bound_0}
 + 2\lambda & \left(\varepsilon r_{\max}+\gamma\eta\right).
\end{align}

Applying the Bellman equations, for any $s\in\mathcal{S}$,
\begin{align}
\nonumber
V^{\pi}(s) = & \mathbb{E}_{a\sim\pi(s),a'\sim\sigma}(s) \left[\left(r + \gamma\mathbb{P} V^{\pi}\right)(s,(a,a'))\right]\\
\label{eq:value_function_pi_1}
=&\mathbb{E}_{\pi,\sigma}\left[r\right]+ \gamma\mathbb{E}_{p,\pi,\sigma}\left[V^{\pi}(s)\right].
\end{align}
Similarly, it also holds that
\begin{align}
\label{eq:value_function_star}
V^{\star}(s) &= \mathbb{E}_{\pi^{\star},{\sigma}}\left[r\right]+ \gamma\mathbb{E}_{p,\pi^{\star},\sigma}\left[V^{\star}(s)\right].
\end{align}

Plugging~\eqref{eq:value_function_pi_1}, and~\eqref{eq:value_function_star} into~\eqref{eq:value_bound_0},
\begin{align}
\nonumber
V^{\star}(s)-V^{\pi}(s)
\leq & \lambda  \gamma 
\Big(\mathbb{E}_{p,\pi,\widetilde{\sigma}}  \big[V^{\star}(s)\big] -\mathbb{E}_{p,\pi^{\star},\widetilde{\sigma}}\big[V^{\star}(s)\big]+ \mathbb{E}_{p,\pi^{\star},\sigma}\big[V^{\star}(s)\big]-\mathbb{E}_{p,\pi,\sigma}\big[V^{\pi}(s)\big]\Big)\\
\label{eq:value_bound_2}
& +  (1-\lambda)\Big(\left(\left(\mathbb{E}_{\pi,\overline{\sigma}}\left[r\right]+\gamma\mathbb{E}_{p,\pi,\overline{\sigma}}\left[\overline{V}(s)\right]\right) -V^{\pi}(s)\right)\Big)\\
\label{eq:value_bound_3}
& +  (1-\lambda)\Big(V^{\star}(s)-\left(\mathbb{E}_{\pi^{\star},\overline{\sigma}}\left[r\right]+\gamma\mathbb{E}_{p,\pi^{\star},\overline{\sigma}}\left[\overline{V}(s)\right]\right)\Big)\\
\label{eq:value_bound_1}
& + 2\lambda\left(\varepsilon r_{\max}+\gamma\eta\right).
\end{align}

\textbf{Part I: Proof of Risk Bound}

Let $\overline{V}^{\pi}(s)\coloneq V^{\pi,\overline{\sigma}}(s)$ for any $s\in\mathcal{S}$.
For the term in~\eqref{eq:value_bound_2}, we always have $V^{\pi}(s)\geq \overline{V}^{\pi}(s)$~\cite{shapley1953stochastic},
\begin{align*}
\left(\mathbb{E}_{\pi,\overline{\sigma}}\left[r\right]+\gamma\mathbb{E}_{p,\pi,\overline{\sigma}}\left[\overline{V}(s)\right]\right) -V^{\pi}(s) 
\leq & \gamma\left(\mathbb{E}_{p,\pi,\overline{\sigma}}\left[\overline{V}(s)\right]-\mathbb{E}_{p,\pi,\overline{\sigma}}\big[\overline{V}^{\pi}(s)\big]\right)\\
= &\gamma\mathbb{E}_{p,\pi,\overline{\sigma}}\left[\overline{V}(s)-\overline{V}^{\pi}(s)\right].
\end{align*}
Furthermore, the Bellman equation implies
\begin{align}
\label{eq:bellman_bound_1}
\overline{V}(s)-\overline{V}^{\pi}(s) =\left(\mathbb{E}_{\overline{\pi},\overline{\sigma}}\left[r\right]- \mathbb{E}_{\pi,\overline{\sigma}}\left[r\right] \right)+\gamma\left(\mathbb{E}_{p,\overline{\pi},\overline{\sigma}}\left[\overline{V}(s)\right]-\mathbb{E}_{p,\pi,\overline{\sigma}}\big[\overline{V}^{\pi}(s)\big]\right).
\end{align}
Applying $\textsc{Bound} \ \mathsf{2}$ in Lemma~\ref{lemma:reward bound},
\begin{align}
\nonumber
\overline{V}(s)-\overline{V}^{\pi}(s) \leq & \gamma\left(\mathbb{E}_{p,\overline{\pi},\overline{\sigma}}\left[\overline{V}(s)\right]-\mathbb{E}_{p,\pi,\overline{\sigma}}\big[\overline{V}^{\pi}(s)\big]+ \mathbb{E}_{p,\pi, \overline{\sigma}}\left[\overline{V}(s)\right]-\mathbb{E}_{p,\overline{\pi},\overline{\sigma}}\left[\overline{V}(s)\right]\right)\\
\nonumber
& +  \frac{\lambda}{1-\lambda}\left(\left(\mathbb{E}_{\pi,\widetilde{\sigma}} \left[r\right]-\mathbb{E}_{\overline{\pi},\widetilde{\sigma}}\left[r\right]\right)+\gamma\left(\mathbb{E}_{p,\pi, \widetilde{\sigma}}\big[\widetilde{V}(s)\big]-\mathbb{E}_{p,\overline{\pi},\widetilde{\sigma}}\big[\widetilde{V}(s)\big]\right)\right)\\
\nonumber
\leq & \gamma\left(\mathbb{E}_{p,\pi, \overline{\sigma}}\left[\overline{V}(s)\right]-\mathbb{E}_{p,\pi,\overline{\sigma}}\big[\overline{V}^{\pi}(s)\big]\right)\\
\label{eq:bellman_bound_2}
& + \frac{2\lambda}{1-\lambda}\left(\left(1+\frac{\gamma}{1-\gamma}\right)\right)r_{\max}.
\end{align}
Since above holds for any state $s\in\mathcal{S}$, let $s'$ be a state such that the value gap $V^{\star}(s)-V^{\pi}(s)$ is maximized. Therefore,~\eqref{eq:bellman_bound_2} implies
\begin{align}
\label{eq:bellman_bound_3}
\overline{V}(s)-\overline{V}^{\pi}(s) \leq \frac{2}{(1-\gamma)^2}\frac{\lambda}{1-\lambda}r_{\max}.
\end{align}

Then, for the term in~\eqref{eq:value_bound_3}, by definition $V^{\star}(s)\leq r_{\max}/(1-\gamma)$, and 
\begin{align}
\label{eq:value_bound_5}
\mathbb{E}_{\pi^{\star},\overline{\sigma}}\left[r\right]+\gamma\mathbb{E}_{p,\pi^{\star},\overline{\sigma}}\left[\overline{V}(s)\right]\geq \gamma\nu - r_{\max}.
\end{align}

Combing the bounds in~\eqref{eq:bellman_bound_3} and~\eqref{eq:value_bound_5} with ~\eqref{eq:value_bound_1}, we conclude that
\begin{align}
\nonumber
V^{\star}(s)-V^{\pi}(s)
\leq & \lambda  \gamma 
\Big(\mathbb{E}_{p,\pi,\widetilde{\sigma}}  \big[V^{\star}(s)\big] -\mathbb{E}_{p,\pi^{\star},\widetilde{\sigma}}\big[V^{\star}(s)\big]+ \mathbb{E}_{p,\pi^{\star},\sigma}\big[V^{\star}(s)\big]-\mathbb{E}_{p,\pi,\sigma}\big[V^{\pi}(s)\big]\Big)\\
\label{eq:value_bound_4}
& + \frac{2}{(1-\gamma)^2}\lambda r_{\max} + (1-\lambda)\left(\frac{2-\gamma}{1-\gamma}r_{\max} - \gamma\nu\right) + 2\lambda  \left(\varepsilon r_{\max}+\gamma\eta\right).
\end{align}

Now, noting that by definition we have
\begin{align*}
&\left|\mathbb{E}_{p,\pi,\sigma}\big[V^{\star}\big]-\mathbb{E}_{p,\pi,\widetilde{\sigma}}\big[V^{\star}\big]\right|\\
\leq & \sum_{a,a'\in \mathcal{A},s'\in\mathcal{S}} p(s'|s,(a,a'))\pi(s,a)\left|\sigma(s,a')-\widetilde{\sigma}(s,a')\right|V^{\star}(s')\leq \varepsilon \frac{r_{\max}}{1-\gamma},
\end{align*}
and similarly,
\begin{align*}
\left|\mathbb{E}_{p,\pi^{\star},\sigma}\big[V^{\star}\big]-\mathbb{E}_{p,\pi^{\star},\widetilde{\sigma}}\big[V^{\star}\big]\right|
\leq & \varepsilon \frac{r_{\max}}{1-\gamma}.
\end{align*}

Therefore, rearranging the terms and simplifying above,~\eqref{eq:value_bound_4} becomes
\begin{align}
\nonumber
V^{\star}(s)-V^{\pi}(s)
\leq & \frac{2\lambda}{(1-\gamma)^2} r_{\max} +(1-\lambda)\left(\frac{2-\gamma}{1-\gamma}r_{\max} - \gamma\nu\right)+ 2\lambda  \left(\varepsilon r_{\max}+\gamma\eta\right) \\
\label{eq:value_gap_1}
& + \frac{2\gamma\lambda r_{\max}}{1-\gamma}\varepsilon + \gamma\lambda\left( \mathbb{E}_{p,\pi,\sigma}\big[V^{\star}\big] - \mathbb{E}_{p,\pi,\sigma}[V^{\pi}]\right).
\end{align}
Since above holds for any state $s\in\mathcal{S}$, let $s^*$ be a state such that the value gap $V^{\star}(s)-V^{\pi}(s)$ is maximized. Thus,
\begin{align*}
&\mathbb{E}_{p,\pi,\sigma}\big[V^{\star}(s^*)\big] - \mathbb{E}_{p,\pi,\sigma}[V^{\pi}(s^*)] \\
=& \sum_{a,a'\in \mathcal{A}}\sum_{s'\in\mathcal{S}} p\left(s'|s^*,(a,a')\right)\pi\left(s^*,a\right)\sigma\left(s^*,a'\right)\left(V^{\star}(s')-V^{\pi}(s')\right)
\leq  V^{\star}(s^*)-V^{\pi}(s^*).
\end{align*}

Continuing from~\eqref{eq:value_gap_1},
\begin{align*}
& \left(V^{\star}(s^*)-V^{\pi}(s^*)\right) - \lambda\gamma \left(V^{\star}(s^*)-V^{\pi}(s^*)\right)\\
\leq & 2r_{\max}\left(\frac{\lambda}{(1-\gamma)^2}+\frac{\gamma\lambda}{1-\gamma}\varepsilon+\lambda\varepsilon\right)+ 2\lambda\gamma\eta
 + (1-\lambda)\left(\frac{2-\gamma}{1-\gamma}r_{\max} - \gamma\nu\right)\\
= & 2r_{\max}\left(\frac{\lambda}{(1-\gamma)^2}+\frac{\lambda\varepsilon}{1-\gamma}\right) + 2\lambda\gamma\eta + (1-\lambda) \left(\frac{2-\gamma}{1-\gamma}r_{\max} - \gamma\nu\right).
\end{align*}
Therefore, rearranging the terms we obtain
\begin{align*}
\Delta_{\mathsf{SBG}}(\varepsilon;\pi) \leq \frac{r_{\max}}{1-\lambda\gamma}\left(\frac{2\lambda\varepsilon}{1-\gamma}+\frac{2\lambda}{(1-\gamma)^2}+\frac{(1-\lambda)(2-\gamma)}{1-\gamma}\right)+\frac{\gamma\left(2\lambda\eta-(1-\lambda)\nu\right)}{1-\lambda\gamma}.
\end{align*}

\textbf{Part II: Proof of Opportunity Bound}

Now, the terms in~\eqref{eq:value_bound_2} can be bounded alternatively as
\begin{align*}
(1-\lambda)\Big(\left(\left(\mathbb{E}_{\pi,\overline{\sigma}}\left[r\right]+\gamma\mathbb{E}_{p,\pi,\overline{\sigma}}\left[\overline{V}(s)\right]\right) -V^{\pi}(s)\right)\Big)\leq & 2\frac{1-\lambda}{1-\gamma}r_{\max}.
\end{align*}
Using this and following the same steps, we obtain
\begin{align*}
\Delta_{\mathsf{SBG}}(\varepsilon;\pi) \leq \frac{r_{\max}}{1-\lambda\gamma}\left(\frac{2\lambda\varepsilon}{1-\gamma}+\frac{2(1-\lambda)}{(1-\gamma)^2}+\frac{(1-\lambda)(2-\gamma)}{1-\gamma}\right)+\frac{\gamma\left(2\lambda\eta-(1-\lambda)\nu\right)}{1-\lambda\gamma}.
\end{align*}

\end{proof}

Finally, setting $\varepsilon=\eta=0$, we obtain 
\begin{align*}
\Delta_{\mathsf{SBG}}(0;\pi) 
% \leq & \frac{r_{\max}}{1-\lambda\gamma}\left(\frac{2\lambda\varepsilon}{1-\gamma}+\frac{2(1-\lambda)}{(1-\gamma)^2}+\frac{(1-\lambda)(2-\gamma)}{1-\gamma}\right)+\frac{\gamma\left(2\lambda\eta-(1-\lambda)\nu\right)}{1-\lambda\gamma},\\
\leq &\frac{1}{1-\lambda\gamma} \left(r_{\max}\left(\frac{\gamma^2-3\gamma+6}{(1-\gamma)^2}\right)-\gamma\nu\right)(1-\lambda).
\end{align*}

Since $d\left(\theta,\theta^{\star}\right)\coloneq\max_{s\in\mathcal{S}}\left\|\sigma(s;\theta)-\sigma(s;\theta^{\star})\right\|_1\leq 2$, setting the worst $\varepsilon=2$, and noticing that $\eta\leq \frac{2 r_{\max}}{1-\gamma}$,
\begin{align*}
   \Delta_{\mathsf{SBG}}(\varepsilon;\pi)
   % \leq &\frac{r_{\max}}{1-\lambda\gamma}\left(\frac{2\lambda\varepsilon}{1-\gamma}+\frac{2\lambda}{(1-\gamma)^2}+\frac{(1-\lambda)(2-\gamma)}{1-\gamma}\right)+\frac{\gamma\left(2\lambda\eta-(1-\lambda)\nu\right)}{1-\lambda\gamma}\\
   \leq & \frac{r_{\max}}{1-\lambda\gamma}\left(\frac{\left(4(1-\gamma^2)+2\right)\lambda}{(1-\gamma)^2}+\frac{(1-\lambda)(2-\gamma)}{1-\gamma}\right) - \frac{\gamma(1-\lambda)\nu}{1-\lambda\gamma}.
\end{align*}

\section{Proof of Theorem~\ref{thm:sbg_lower}}
\label{app:proof_sbg_lower}

We prove the theorem by considering a stateless MDP, which reduces a stochastic Bayesian game $\mathcal{M}= \langle \mathcal{S}, \mathcal{A}, \Theta, \sigma, p, r, \gamma\rangle$ to a normal-form game in Theorem~\ref{thm:normal_form_lower}. Furthermore, let $\mathcal{L}\coloneq \left\{1,\ldots,\prod_{j\neq i}|\mathcal{A}(j)\right\}$. We fix a strategy kernel $\sigma$ such that
$$\kappa(\Theta)\coloneq \max_{s\in\mathcal{S}, \theta,\theta'\in\Theta}\left(\sum_{l\in\mathcal{L}:\sigma_l(s;\theta)=0}\sigma_l(s;\theta')-\sum_{l\in\mathcal{L}:\sigma_l(s;\theta)>0}\sigma_l(s;\theta')\right)=1.$$
Therefore, the construction of the payoff matrix $A\in\mathbb{R}^{|\mathcal{A}(i)|\times \prod_{j\neq i}|\mathcal{A}(j)|}$ in the proof of Theorem~\ref{thm:normal_form_lower} with $\mu_{\Theta}(A)\leq \frac{r_{\max}}{1-\gamma}$ and $\nu_{\Theta}(A)\geq \nu$ 
 yields that for any $0\leq \lambda\leq 1$, if any mixed strategy $\pi$ misses $\smash{\frac{1}{1-\lambda\gamma}\left(r_{\max}-\nu\right)(1-\lambda)}$ opportunity, then since
 \begin{align*}
\frac{1}{1-\lambda\gamma}\left(r_{\max}-\nu\right)(1-\lambda) \leq \left(\frac{r_{\max}}{1-\gamma}-\nu\right)(1-\lambda),
 \end{align*} then $\pi$ has at least $\left(\frac{r_{\max}}{1-\gamma}-\nu\right)\left(1+\lambda\right)$ risk. Furthermore,
 since
 \begin{align*}
\left(\frac{r_{\max}}{1-\gamma}-\nu\right)\left(1+\lambda\right) \geq \left(\frac{r_{\max}}{1-\lambda\gamma}-\frac{\nu}{1-\lambda\gamma}\right)\left(1+\lambda\right),
 \end{align*}
 $\pi$ has at least $\frac{1}{1-\lambda\gamma}\left(r_{\max}-\nu\right)(1+\lambda)$ risk.
\section{Details on Experiments}
\label{app:experiments}
\subsection{Details on $2 \times 2$ Game simulations}

\subsubsection{Game definition}
\begin{itemize}
    \item We define a Stochastic Bayesian game with one state: a particular $2 \times 2$ game sampled from the topology of $2 \times 2$ games. 
    \item The payoffs for each player are stipulated by the $2 \times 2$ game. We set the time horizon for the game be 1,000 giving us an empirical evaluation of the expected payoff for each strategy
\end{itemize}
\subsubsection{Type definitions}
As mentioned we have 3 classes of types and below we provide a description of the general classes as well as the specific types we used in our evaluation. 
\begin{enumerate}
    \item \textbf{Markovian Types:} These are types whose strategy only depends on the current state. We made use of 4 types of Markovian strategies
    \begin{itemize}
        \item \textbf{Type 1:} Always play action 0
        \item \textbf{Type 2:} Always play action 1
        \item \textbf{Type 3:} The minimax strategy for the player
        \item \textbf{Type 4:} Returns a random strategy
    \end{itemize}
    \item \textbf{Leader- Follower-Trigger Agents:} Inspired by \cite{crandall2014towards}, these agents have a preferred sequence of play they seek to enforce. Importantly they have access to history of play and when the other player does not play according to their preferred sequence the alter their strategy by engaging in ``punishing'' behavior such as playing the minimax strategy or by resetting to a previous action. In our case, we evaluated against simple versions of such agents, wherein our agent had a preferred mixed strategy and when the empirical observed strategy over a preset number of previous plays from the opponent did not match their preferred strategy, they ``punished'' the opponent by playing a minimax strategy. In particular, we had an agent look at the previous 4 actions of the opponent and if they selected action 1 more than twice, they chose to play a minimax strategy for the next round. 
    \item \textbf{Co-evolved Neural Networks:} Inspired by work in \cite{albrecht2015belief} we use ideas of genetic programming \cite{koza1992genetic}, to generate agents from neural networks.  We randomly initialized 10 neural networks with a single hidden layer for both the row and column player. All networks would take as input the previous 4 actions of both players and the corresponding state information. We sample randomly from populations of the row and column players and simulate the game. After simulation we calculate a fitness score based on average payoffs for each agent and a similarity score so as to ensure diversity in the models. We ``evolve'' the populations by selecting random portions of both populations to mutate whilst also having cross-over between members of the populations selected by fitness (this is done for both populations hence ``co-evolve''). We then proceed to create new populations using the most fit agents from a previous generation (i.e., we take the top 50\% of a population pre-evolution and 50\% post evolution and then create a new population if the average fitness of this new constructed population is greater than the average fitness of the previous population.)
\end{enumerate}
\subsubsection{Experimental illustration of tightness of bounds}
We have included additional adversarial examples to demonstrate the tightness of the bounds presented in Theorems ~\ref{thm:normal_form_upper} and \ref{thm:normal_form_lower}. In particular, we consider the zero-sum Matching Pennies (MP) and an Adjusted Matching Pennies (AMP) as our two examples. We assume the hypothesis set $\Theta$ for Player 2 contains 6 behavioral types:
\begin{itemize}
    \item \textbf{Markovian Types}:
    \begin{itemize}
        \item \textbf{Type 1}: Always play action 0
        \item \textbf{Type 2}: Always play action 1
        \item \textbf{Type 3}: Minimax strategy
        \item \textbf{Type 4}: Random strategy
    \end{itemize}
    \item \textbf{Type 5}: Leader-Follower-Trigger Agents
    \item \textbf{Type 6}: Co-evolved Neural Networks
\end{itemize}

The payoff matrices for the MP and AMP, respectively are
\begin{align*}
    \begin{bmatrix} 1 & -1 \\ -1 & 1 \end{bmatrix}, \text{and }\begin{bmatrix} 1.2 & -0.8 \\ -0.8 & 1.2 \end{bmatrix}=\begin{bmatrix} 1 & -1 \\ -1 & 1 \end{bmatrix} + \begin{bmatrix} 0.2 & 0.2 \\ 0.2 & 0.2 \end{bmatrix}. 
\end{align*}

It's worth mentioning that the AMP payoff matrix constructed above is an adversarial example that follows the same construction of the adversarial payoff matrix $A$ used in the proof of Theorem \ref{thm:normal_form_lower}. Given the considered $\Theta$, we know that $\kappa(\Theta)=1$ and $\eta(\Theta)=2$. The upper and lower bounds in Theorem \ref{thm:normal_form_upper} and \ref{thm:normal_form_lower} therefore read:
\begin{align*}
    &\textit{Upper Bound: } x=(1-\lambda)(\mu-\nu), \ y=(1-\lambda)(\mu-\nu)+2\lambda\mu, \lambda\in [0,1],\\
    &\textit{Lower Bound: } x=(1-\lambda)(\mu-\nu), \ y=(1+\lambda)(\mu-\nu), \lambda\in [0,1],
\end{align*}
where for MP, $\mu=1$, $\nu=0$; for AMP, $\mu=1.2$, $\nu=0.2$.

Besides the bounds above, we also plot simulated risk and opportunity values in the figures using an algorithm that has varying trust of type beliefs in 100 and 1,000 runs. The algorithm is presented with a type prediction and takes a convex combination of the best response to the type prediction and the minimax strategy with the trust parameter $\lambda$ determining how much to weigh the best response. It is the same as $\pi$, the mixed strategy used to prove Theorem \ref{thm:normal_form_upper}. Fully trusting ($\lambda=1$) and distrusting ($\lambda=0$) type beliefs yield a best response strategy and a minimax strategy correspondingly. The agent then samples from their resulting mixed strategy an action to play while the opponent also samples from whatever mixed strategy they are using an action to play.

We sample this interaction for the number of runs and gather empirical payoff information, which we use to plot the opportunity risk tradeoff. The variance of the risk and opportunity values decreases as the number of runs increases, as illustrated by the error bars in the attached figures. Moreover, we include as plots the bounds on the opportunity risk tradeoff and show the tightness or looseness of these bounds in two games we pick. Please note that some of the simulated risk and opportunity values fall outside the bounds. This is because the bounds are applicable only to expected payoff gaps, and individual simulations may deviate from these expected values.

In the Fig \ref{fig:bounds}, the top two figures display the results from 100 runs, while the bottom two figures present the results from 1,000 runs. The left two figures correspond to the MP example, while the right two figures are for the AMP example, demonstrating the gap between the lower and upper bounds. In particular, in an AMP game, we see that the upper bound is loose, and the empirical opportunity-risk tradeoff matches the lower bound we derive. In future work, we will investigate if this gap can be closed. This ‘adversarial’ example illustrates the looseness in the upper bound we derive. In the canonical Matching Pennies game, we find that the lower and upper bounds are tight with the empirical tradeoff matching these theoretical bounds, coinciding with Corollary \ref{coro:nfg}.

\begin{figure}[h]
    \centering
    \includegraphics[width=0.8\textwidth]{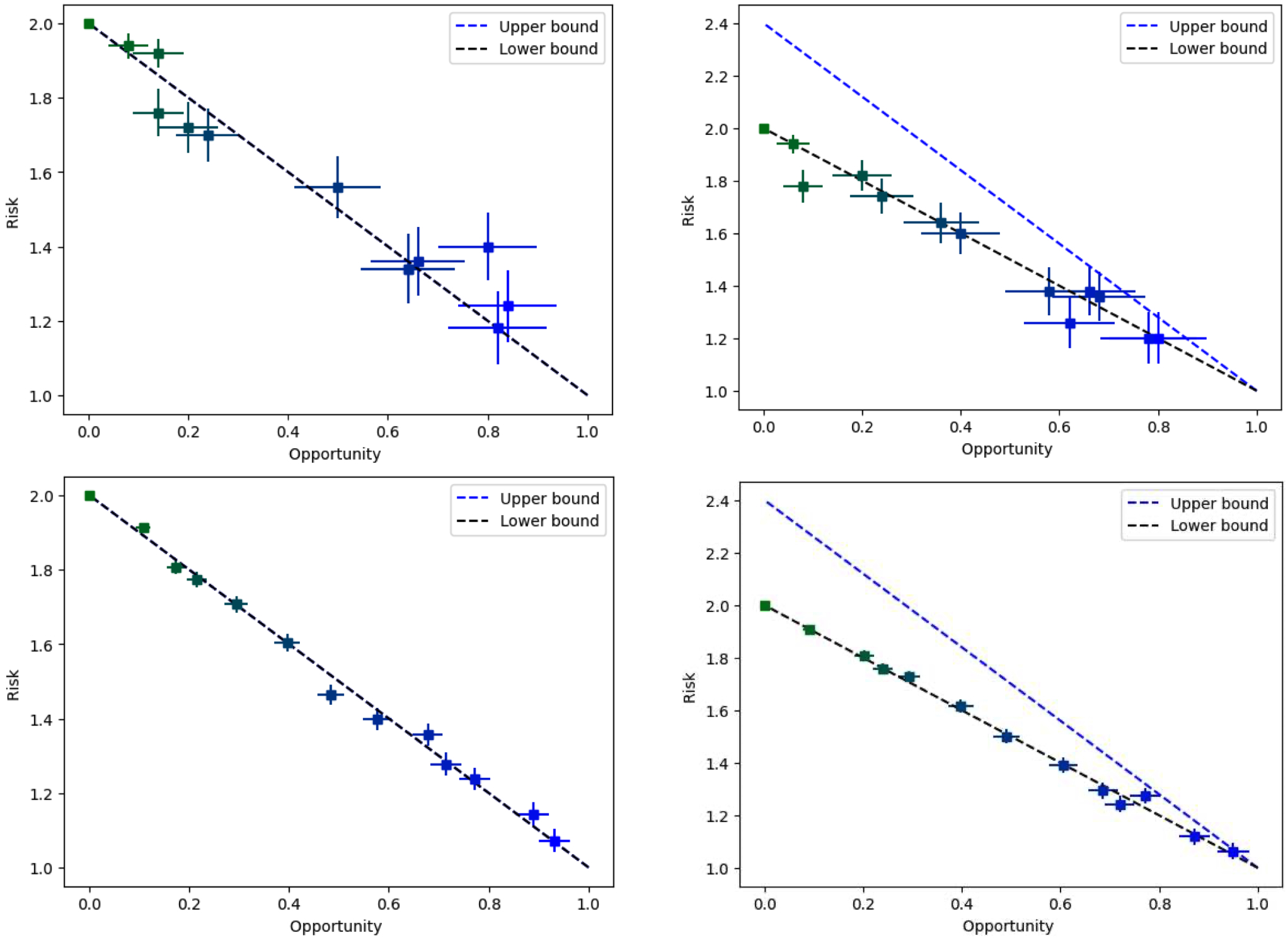} 
    \caption{Opportunity-risk tradeoff for Matching Pennies (MP) and Adjusted Matching Pennies (AMP) games over 100 (top) and 1,000 (bottom) runs. \textbf{Left}: Matching Pennis. \textbf{Right}: Adjusted Matching Pennies. As the number of runs increases, the variance of both risk and opportunity values decreases.}
    \label{fig:bounds}
\end{figure}
\newpage

\subsection{Details on Green Security Game simulations}
\subsubsection{Data description}
The data is from a study tracking geo-location information of 32 African elephants in the Hwange National Park in Zimbabwe, Africa. The data was collected from 2009 to 2017 and includes time stamp information as well as longitudinal and latitudinal information of the elephants
\subsubsection{Data processing}
We divided each year into two based on the seasons in Zimbabwe which would affect the location of the elephants as they migrate given changes in the environment. A year was divided into a ``Rainy'' season which runs from October to March and a  ``Dry'' season which runs from April to September.

We divided the area covered in the dataset into 9 locations and thus had a $3 \times 3$ grid which served as a surveillance area. For each elephant in the season, we calculated its mean location and so for each of the seasons we had a mean location for the elephants. It is important to note, given migratory nature of elephants across national borders, each season does not necessarily have all 32 elephants as they move as a result of a wide range of factors (e.g., availability of water). Some seasons, also notably, do not have any recorded elephant presence in the area under surveillance. We do not see this as a limitation as the dynamic environment presents changes in the game which the defender has to take into account. 

\subsubsection{Green Security Game}
Each of the seasonal elephant information presents us with a state for our Stochastic Bayesian Game. We define the game in the following manner:
\begin{itemize}
    \item \textbf{State:} We have 16 states from the seasonal information we get from the data
    \item \textbf{Actions:} Each player (attacker or defender) has 9 available actions each corresponding to a selection of an area in the $3 \times 3$ grid defined above. 
    \item \textbf{Payoffs:} Let $n$ be the number of elephants recorded in a particular grid square: 
    \begin{itemize}
        \item If the attacker and defender select the same grid square, the defender gets $n$ while the attacker gets $-2$
        \item If the attacker and defender select different grid squares, let $n_{att}$ be the number of elephants in the grid square selected by the attacker. The defender gets payoff $-n_{att}$ while the attacker gets payoff $n_{att}$
    \end{itemize}
    These payoffs were designed to model the asymmetric nature of the defending task. An offender often will get a fixed penalty as stipulated by law, whilst the defender always depends on the number of elephants the attacker has access to. 
    \item \textbf{Transitions:} To take into account the effect of changes in weather whilst also bringing in some stochasticity, we made it such that there is some transition probability between any ``Rainy'' state to any ``Dry'' state and vice-versa. We do however, slightly, increase the probability of transition between adjacent historical states, to reflect historical data. We have the probability of transitioning between any two ``Rainy'' or ``Dry'' states to be zero. Our transition probabilities do not depend on the actions taken by the agents. 
\end{itemize}
\subsubsection{Type definitions}
We make use of the same types as in the $2 \times 2$ game simulations. We make adjustments to the Markovian types in that Types 1 and 2 now select the grid with the highest population of elephants and second highest population of elephants, respectively. The Leader-Follower-Trigger Agents now looks to see if the other player is selecting the grid with the highest number of elephants for more than $50\%$ of the historic play in which case they turn to play their minimax strategy

\subsection{Additional $2 \times 2$ game evaluations}
We include as an illustration as well as for completeness a couple of other $2 \times 2$ games we also evaluated against. This is helpful as it shows the variety of tradeoffs that exist within the topology of $2 \times 2$ games. In particular, we see some games exhibiting gradation as the agent moves from fully robust to fully trusting whilst in others there does not exist such a tradeoff because of the existence of a dominant strategy for the row player regardless of the type of the column player (e.g., the last game we show in this section). We note that the games provided in this file do not exhaust the entire topology of $2\times 2$ games. They are however added to show the range of tradeoffs that could exist in games. 

\newpage
\begin{figure}[h!]
\centering
\includegraphics[width=0.75\textwidth]{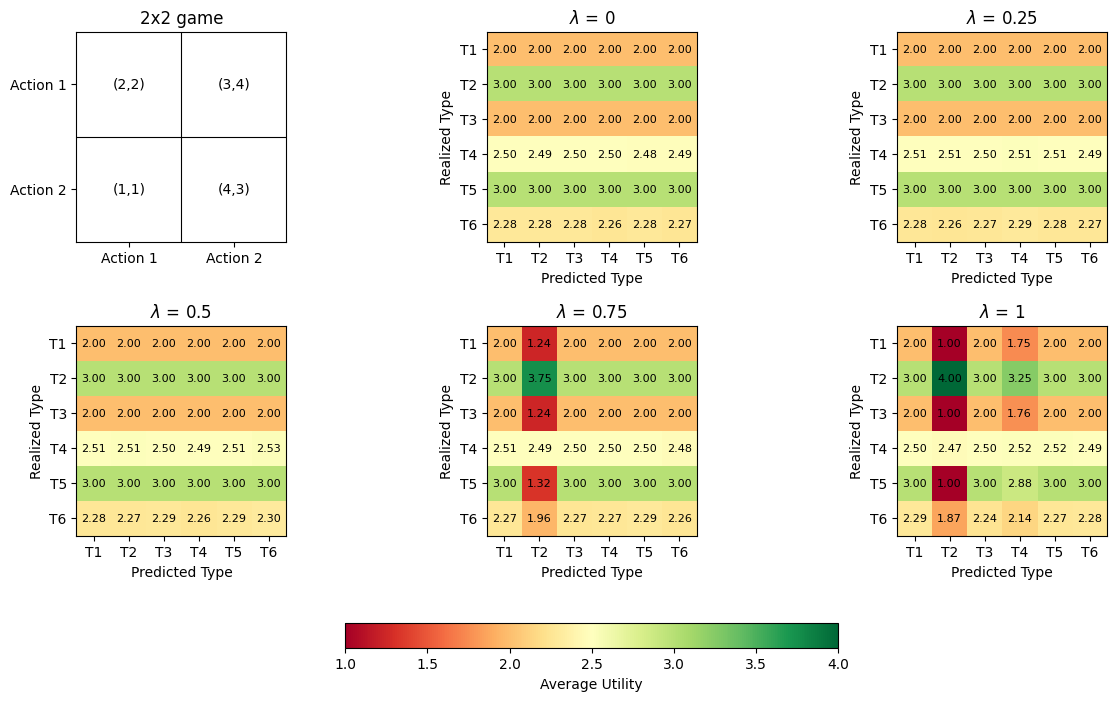}
\end{figure}
%%%%%%%%%%
\begin{figure}[h!]
\centering
\includegraphics[width=0.75\textwidth]{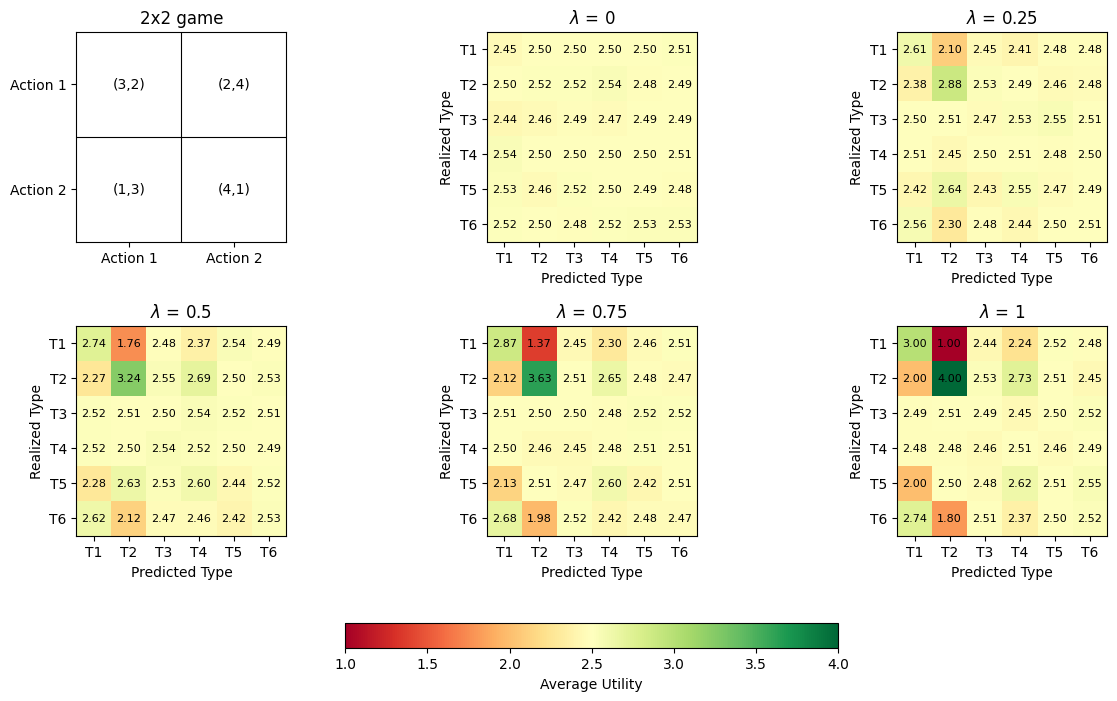}
\end{figure}
%%%%%%%%%%
\begin{figure}[h!]
\centering
\includegraphics[width=0.75\textwidth]{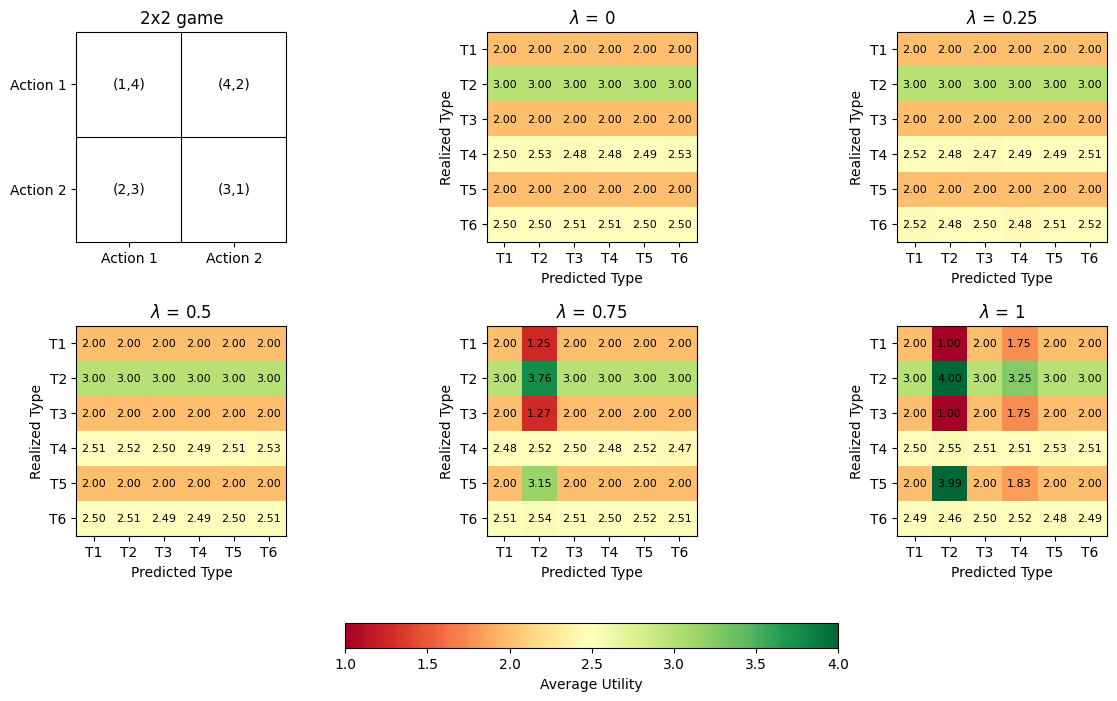}
\end{figure}
%%%%%%%%%%
\begin{figure}[h!]
\centering
\includegraphics[width=0.75\textwidth]{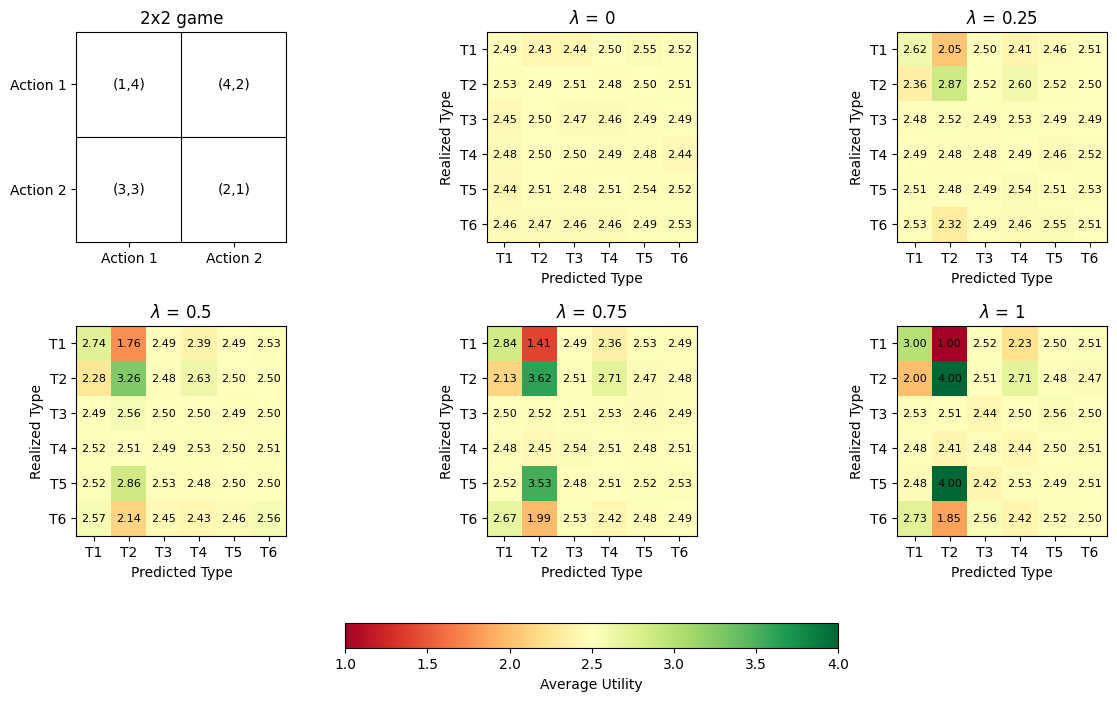}
\end{figure}
%%%%%%%%%%
\begin{figure}[h!]
\centering
\includegraphics[width=0.75\textwidth]{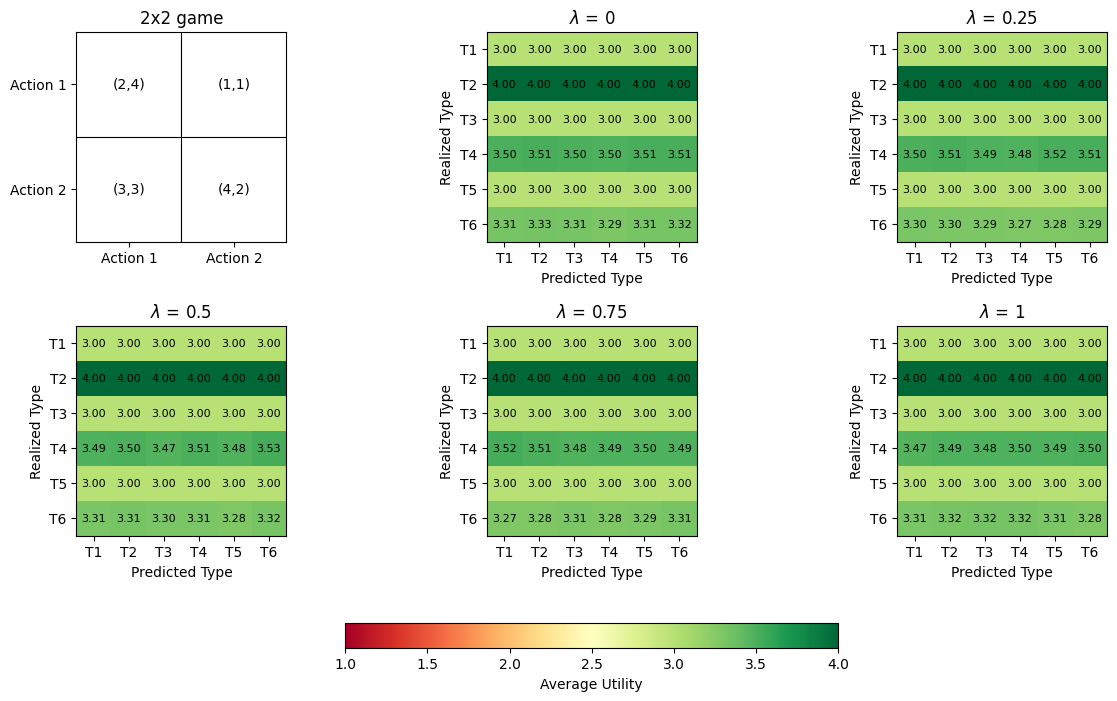}
\end{figure}

\end{document}